\newcommand{\ie}[0]{\emph{i.e., }}
\newcommand{\eg}[0]{\emph{e.g., }}
\newcommand{\beq}{\vspace{0mm}\begin{equation}}
\newcommand{\eeq}{\vspace{0mm}\end{equation}}
\newcommand{\beqs}{\vspace{0mm}\begin{eqnarray}}
\newcommand{\eeqs}{\vspace{0mm}\end{eqnarray}}
\newcommand{\barr}{\begin{array}}
\newcommand{\earr}{\end{array}}
\newcommand{\Amat}[0]{{{\bf A}}}
\newcommand{\Bmat}{{\bf B}}
\newcommand{\Imat}{{\bf I}}
\newcommand{\Vmat}[0]{{{\bf V}}}
\newcommand{\Xmat}[0]{{{\bf X}}}
\newcommand{\av}[0]{{\boldsymbol{a}}}
\newcommand{\bv}[0]{{\boldsymbol{b}}}
\newcommand{\dv}{\boldsymbol{d}}
\newcommand{\thetav}{\boldsymbol{\theta}}
\newcommand{\R}{\mathbb{R}}
\newcommand{\E}{\mathbb{E}}
\newcommand{\Ycal}{\mathcal{Y}}
\newcommand{\Lcal}{\mathcal{L}}
\newcommand{\Ncal}{\mathcal{N}}
\newcommand{\Dcal}{\mathcal{D}}
\newcommand{\Ical}{\mathcal{I}}
\newtheorem{theorem}{Theorem} 
\newtheorem{lemma}[theorem]{Lemma}
\newtheorem{corollary}[theorem]{Corollary}
\newtheorem{assumption}{Assumption}
\newenvironment{proof}[1][Proof]{\begin{trivlist}
\item[\hskip \labelsep {\bfseries #1}]}{\end{trivlist}}
\newcommand{\qed}{\nobreak \ifvmode \relax \else
      \ifdim\lastskip<1.5em \hskip-\lastskip
      \hskip1.5em plus0em minus0.5em \fi \nobreak
      \vrule height0.75em width0.5em depth0.25em\fi}
\newcommand{\RN}[1]{%
  \textup{\lowercase\expandafter{\it \romannumeral#1}}%
}
\newcommand{\envelope}{\raisebox{-.5pt}{\scalebox{1.05}{\Letter}}\kern1.0pt}
\begin{document}
%
\title{Preconditioned Stochastic Gradient Langevin Dynamics for \\Deep Neural Networks
	} 
\author{Chunyuan Li$^{1}$, Changyou Chen$^{1}$, David Carlson$^{2}$ and Lawrence Carin$^{1}$\\
	$^{1}$Department of Electrical and Computer Engineering, Duke University\\
	$^{2}$Department of Statistics and Grossman Center, Columbia University\\
	{\footnotesize
	\href{mailto:chunyuan.li@duke.edu}{  \;\;\;\;\nolinkurl{chunyuan.li@duke.edu,}  } 
	\href{mailto:cchangyou@gmail.com}{\nolinkurl{cchangyou@gmail.com,}  } 
	\href{mailto:david.edwin.carlson@gmail.com}{\nolinkurl{david.edwin.carlson@gmail.com,}  } 
	\href{mailto:lcarin@duke.edu}{\nolinkurl{lcarin@duke.edu} }
	}
}
\maketitle
\begin{abstract}
\begin{quote}
Effective training of deep neural networks suffers from two main issues.  The first is that the parameter spaces of these models exhibit pathological curvature. Recent methods address this problem by using adaptive preconditioning for Stochastic Gradient Descent (SGD).  These methods improve convergence by adapting to the local geometry of parameter space. A second issue is overfitting, which is typically addressed by early stopping.  However, recent work has demonstrated that Bayesian model averaging mitigates this problem.  The posterior can be sampled by using Stochastic Gradient Langevin Dynamics (SGLD). However, the rapidly changing curvature renders default SGLD methods inefficient.  Here, we propose combining adaptive preconditioners with SGLD.  In support of this idea, we give theoretical properties on asymptotic convergence and predictive risk. We also provide empirical results for Logistic Regression, Feedforward Neural Nets, and Convolutional Neural Nets, demonstrating that our preconditioned SGLD method gives state-of-the-art performance on these models.

\end{quote}
\end{abstract}

\section{Introduction}
Deep Neural Networks (DNNs) have recently generated significant interest, largely due to their state-of-the-art performance on a wide variety of tasks, such as image classification~\cite{krizhevsky2012imagenet} and language modeling~\cite{sutskever2014sequence}.  Despite this significant empirical success, it remains a challenge to effectively train DNNs.  This is due to two main problems: $(\RN{1})$ The function under consideration is often difficult to optimize and find a good local minima.  It is believed that this is in large part due to the pathological curvature and highly non-convex nature of the function to be optimized~\cite{dauphin2014identifying}. $(\RN{2})$  Standard optimization techniques lead to overfitting, typically addressed through early stopping~\cite{srivastava2014dropout}.

A Bayesian approach for learning neural networks incorporates uncertainty into model learning, and can reduce overfitting~\cite{mackay1992practical}.  In fact, it is possible to view the standard dropout technique~\cite{srivastava2014dropout} as a form of Bayesian approximation that incorporates uncertainty~\cite{Gal2015DropoutB,kingma2015variational}.  Many other recent works~\cite{blundell2015weight,hernandez2015probabilistic,korattikara2015bayesian} advocate incorporation of uncertainty estimates during model training to help improve robustness and performance.

While a Bayesian approach can ameliorate the overfitting issue in these complicated models, exact Bayesian inference in DNNs is generally intractable.  Recently, several approaches have been proposed to approximate a Bayesian posterior for DNNs, including a stochastic variational inference (SVI) method ``Bayes by Backprop'' (BBB)~\cite{blundell2015weight} and an online expectation propogation method (OEP) ``probabilistic backpropagation'' (PBP)~\cite{hernandez2015probabilistic}. These methods assume the posterior is comprised of separable Gaussian distributions. While this is a good choice for computational reasons, it can lead to unreasonable approximation errors and underestimation of model uncertainty.

A popular alternative to SVI and OEP is to use Stochastic Gradient Markov Chain Monte Carlo (SG-MCMC) methods to generate posterior samples~\cite{welling2011bayesian,ChenFG:ICML14,DingFBCSN:NIPS14,li2016thermostats}.  One of the most common SG-MCMC methods is the Stochastic Gradient Langevin Dynamics (SGLD) algorithm \cite{welling2011bayesian}.  One merit of this approach is that it is highly scalable; it requires only the gradient on a small mini-batch of data, as in the optimization method Stochastic Gradient Descent (SGD).  It has been shown that these MCMC approaches converge to the true posterior by using a slowly-decreasing sequence of step sizes~\cite{TehThiVol2014a,chen2015integrator}.  Costly Metropolis-Hasting steps are not required.

Unfortunately, DNNs often exhibit pathological curvature and saddle points~\cite{dauphin2014identifying}, which render existing SG-MCMC methods inefficient.  In the optimization literature, numerous approaches have been proposed to overcome this problem, including methods based on adapting a preconditioning matrix in SGD to the local geometry~\cite{duchi2011adaptive,kingma2014adam,dauphin2015rmsprop}.  These approaches estimate second-order information with trivial per-iteration overhead, have improved risk bounds in convex problems compared to SGD, and demonstrate improved empirical performance in DNNs.  The idea of using geometry in SG-MCMC has been explored in many contexts \cite{ahn2012bayesian,girolami2011riemann,patterson2013stochastic} and includes second-order approximations.  Often, these approaches use the expected Fisher information, adding significant computational overhead.  These methods lack the scalability necessary for learning DNNs, as discussed further below.  

We combine adaptive preconditioners from optimization with SGLD, to improve SGLD efficacy. To note the distinction from SGLD, we refer to this as the Preconditioned SGLD method (pSGLD).  This procedure is simple and adds trivial per-iteration overhead.  We first show theoretical properties of this method, including bounds on risk and asymptotic convergence properties.  We demonstrate improved efficiency of pSGLD by demonstrating an enhanced bias-variance tradeoff of the estimator for small problems.  We further empirically demonstrate its application to several models and large datasets, including deep neural networks.   In the DNN experiments, pSGLD outperforms the results based on standard SGLD from \cite{korattikara2015bayesian}, both in terms of convergence speed and the test-set performance.  Futher, pSGLD generates state-of-the-art performance for the examples tested.

\vspace{-2mm}
\section{Related Work}

Various regularization schemes have been developed to prevent overfitting in neural networks, such as early stopping, weight decay, dropout~\cite{srivastava2014dropout}, and dropconnect~\cite{wan2013regularization}.  Bayesian methods are appealing due to their ability to avoid overfitting by capturing uncertainty during learning~\cite{mackay1992practical}. MCMC methods work by producing Monte Carlo approximations to the posterior, with asymptotic consistency~\cite{neal1995bayesian}.  Traditional MCMC methods use the full dataset, which does not scale to large data problems.  A pioneering work in combining stochastic optimization with MCMC was presented in \cite{welling2011bayesian}, based on Langevin dynamics~\cite{neal2011mcmc}.  This method was referred to as Stochastic Gradient Langevin Dynamics (SGLD), and required only the gradient on mini-batches of data.  The per-iteration cost of SGLD is nearly identical to SGD.  Unlike SGD, SGLD can generate samples from the posterior by injecting noise into the dynamics.  This encourages the algorithm to explore the full posterior, instead of simply converging to a maximum {\em a posterior} (MAP) solution. Later, SGLD was extended by \cite{ahn2012bayesian}, \cite{patterson2013stochastic} and \cite{korattikara2015bayesian}. Furthermore, higher-order versions of the SGLD with momentum have also been proposed, including stochastic gradient Hamiltionian Monte Carlo (SGHMC)~\cite{ChenFG:ICML14} and stochastic gradient Nose-Hoover Thermostats (SGNHT)~\cite{DingFBCSN:NIPS14}.

It has been shown that incoporating higher-order gradient information helps train neural networks when employing optimization methods \cite{ngiam2011optimization}. However, calculations of higher-order information is often cumbersome in most models of interest. Methods such as quasi-Newton, and those approximating second-order gradient information, have shown promising results~\cite{ngiam2011optimization}. An alternative to full quasi-Newton methods is to rescale parameters so that the loss function has similar curvature along all directions. This strategy has shown improved performance in Adagrad~\cite{duchi2011adaptive}, Adadelta~\cite{zeiler2012adadelta}, Adam~\cite{kingma2014adam} and RMSprop~\cite{tieleman2012lecture} algorithms. Recently, RMSprop has been explained as a diagonal preconditioner in~\cite{dauphin2015rmsprop}. While relatively mature in optimization, these techniques have not been developed in sampling methods. In this paper, we show that rescaling the parameter updates according to geometry information can also improve SG-MCMC, in terms of both training speed and predictive accuracy.

\vspace{-2mm}
\section{Preliminaries}

Given data $\Dcal = \{\dv_i \}^N_{i=1}$, the posterior of model parameters $\thetav$ with prior $p(\thetav)$ and likelihood $\prod_{i=1}^N p(\dv_i  | \thetav)$
is computed as $p( \thetav | \Dcal ) \propto p(\thetav ) \prod_{i=1}^{N} p(\dv_i  | \thetav)$. In the optimization literature, the prior plays the role of a penalty that regularizes parameters, while the likelihood constitutes the loss function to be optimized. The task in optimization is to find the MAP estimate, $\thetav_{\textsl{MAP}} = \arg\!\max \log p(\thetav | \Dcal)$. Let $\Delta\thetav_t$ denote the change in the parameters at time $t$. Stochastic optimization methods such as Stochastic Gradient Descent (SGD)\footnote{For maximization, this is Stochastic Gradient {\em Ascent}. Here, we abuse notation because SGD is a more common term.} update $\thetav$ using the following rule:
\vspace{-1.5mm}
\beqs
\Delta	\thetav_{t} =  
\epsilon_t 
\Big (
\nabla_{\thetav} \log p(\thetav_t) 
+ \frac{N}{n}\sum_{i=1}^{n} \nabla_{\thetav} \log p(\dv_{t_i} | \thetav_t ) 
\Big )
\label{Eq:sgd_update} 
\eeqs
where $\{\epsilon_t\}$ is a sequence of step sizes, and 
$\Dcal^t = \{\dv_{t_1}, \cdots , \dv_{t_n}\}$ a subset of $n<N$ data items randomly chosen from $\mathcal{D}$ at iteration $t$.  The convergence of SGD has been established \cite{Bottou:04}.

For DNNs, the gradient is calculated by backpropagation~\cite{williams1986learning}.  One data item $ \dv_i \triangleq  (x_i, y_i) $ may consist of input $x_i \in \R^D $ and output $y_i \in \Ycal$, with $\Ycal$ being the output space (\eg a discrete label space in classification). In the testing stage, the Bayesian predictive estimate for input $x$, is given by $p(y|x,\Dcal)=\mathbb{E}_{p(\thetav|\Dcal)}[p(y|x,\thetav)]$.  The MAP estimate simply approximates this expectation as $p(y|x, \Dcal ) \approx p(y|x, \thetav_{\text{MAP}})$, ignoring parameter uncertainty.


Stochastic sampling methods such as SGLD incorporate uncertainty into predictive estimates.  SGLD samples $\thetav$ from the posterior distributions via a Markov Chain with steps:
\vspace{-2mm}
{\small
\begin{equation}
\Delta	\thetav_{t}   \sim
\Ncal
\left(
\frac{\epsilon_t }{2}
\Big (
\nabla_{\thetav}  \log p(\thetav_t)  
+ \frac{N}{n}\sum_{i=1}^{n} \nabla_{\thetav} \log p(\dv_{t_i} | \thetav_t ) 
\Big )
, \epsilon_t \Imat  
\right)
\label{Eq:sgld_update} 
\end{equation}}
with $\Imat$ denoting the identity matrix.  It also uses mini-batches to take gradient descend steps at each iteration. Rates of convergence are proven rigorously in \cite{TehThiVol2014a}. Given a set of samples from the update rule \eqref{Eq:sgld_update},
posterior distributions can be approximated via Monte Carlo approximations as $p(y|x, \Dcal ) \approx \frac{1}{T} \sum_{t=1}^{T} p(y|x, \thetav_t )$, where $T$ is the number of samples.

Both stochastic optimization and stochastic sampling approaches have the requirement that the step sizes satisfy the the following  assumption.\footnote{The requirement for SGLD can be relaxed, see \cite{TehThiVol2014a,chen2015integrator} for more details.}
\begin{assumption}\label{ass:stepsize_constraints}
	The step sizes $\{\epsilon_t\}$ are decreasing, {\it i.e.}, $0 < \epsilon_{t+1} < \epsilon_{t}$, with
	1) $\sum_{t=1}^{\infty}
	\epsilon_t 
	= \infty$; 
	and 2) $\sum_{t=1}^{\infty}
	\epsilon_t^2 
	< \infty$.
\end{assumption}
If these step-sizes are not satisfied in stochastic optimization, there is no guarantee of convergence because the gradient estimation noise is not eliminated.  Likewise, in stochastic sampling, decreasing step-sizes are necessary for asymptotic consistency with the true posterior, where the approximation error is dominated by the natural stochasticity of Langevin dynamics~\cite{welling2011bayesian}. 
%

\vspace{-1mm}
\section{Preconditioned Stochastic Gradient Langevin Dynamics} \label{sec:pre_sgld}
As noted in the previous section, standard SGLD updates all parameters with the same step size. This could lead to slow mixing when the components
of $\thetav$ have different curvature. Unfortunately, this is generally true in DNNs due to the composition of nonlinear functions at multiple layers. A potential solution is to employ a user-chosen preconditioning matrix $G(\thetav)$ in SGLD~\cite{girolami2011riemann}. The intuition is to consider the family of probability distributions $p( \dv | \thetav)$ parameterised by $\thetav$ lying on a Riemannian manifold. One can use the non-Euclidean geometry implied by this manifold to guide the random walk of a sampler. For any probability distribution, the expected Fisher information matrix $\Ical_{\thetav}$ defines a natural Riemannian metric tensor. To further scale up the method to a general online framework stochastic gradient Riemannian Langevin dynamics (SGRLD) was suggested in~\cite{patterson2013stochastic}. At position $\thetav_t$, 
it gives the step\footnote{The update form in \cite{patterson2013stochastic}
is more complicated and seemingly different from \eqref{Eq:sgrld_update}; however, they can be shown to be equivalent.}, 
\vspace{-1mm}
{\small
\begin{align}
\Delta\thetav_{t} & \sim	 
\frac{\epsilon_t}{2}   
\Big [  G(\thetav_t) 
\Big (
\nabla_{\thetav}  \log p(\thetav_t)   \label{Eq:sgrld_update}  \\
& + \frac{N}{n}\sum_{i=1}^{n} \nabla_{\thetav}  \log p(\dv_{t_i} | \thetav_t ) 
\Big )   
+ \Gamma(\thetav_t)
\Big ] 
+ G^{\frac{1}{2}}(\thetav_t) \Ncal(0, \epsilon_t \Imat  )
 \nonumber
\end{align}}
\!\!\!where $ \Gamma_i(\thetav)= \sum_j \frac{\partial G_{i,j} (\thetav)}{\partial \theta_{j}}$ describes how the preconditioner changes with respect to $\thetav_t$. . This term vanishes in SGLD because the preconditioner of SGLD is a constant $\Imat$. Both the direction and variance in Eq.(\ref{Eq:sgrld_update}) depends on the geometry of $G^{}(\thetav_t)$.  The natural gradient in the SGRLD step takes the direction of steepest descent on a manifold. Convergence to the posterior is guaranteed \cite{TehThiVol2014a,chen2015integrator} as long as step sizes satisfy Assumption~\ref{ass:stepsize_constraints}. 

Unfortunately, for many models of interest, the expected Fisher information is intractable. However, we note that any positive definite matrix defines a valid Riemannian manifold metric. Hence, we are not restricted to using the exact expected Fisher information. Preconditioning aims to constitute a local transform such that the rate of curvature is equal in all directions. Following this, we propose to use the same preconditoner as in RMSprop. This preconditioner is updated sequentially using only the current gradient information, and only estimates a diagonal matrix. It is given sequentially as,
\vspace{-2mm}
\begin{align}
&G( \thetav_{t+1} )=\text{diag} \left ( {\bf 1}  \oslash \big ( \lambda {\bf 1} + \sqrt{V( \thetav_{t+1} ) } \big ) \right )\\
&V( \thetav_{t+1} )=\alpha V( \thetav_{t} )+ (1-\alpha)  \bar{g}(\thetav_t; \Dcal^t  ) \odot  \bar{g}(\thetav_t; \Dcal^t )~,
\end{align}
%
where for notational simplicity, $\bar{g}(\thetav_t; \Dcal^t ) = \frac{1}{n}\sum_{i=1}^{n} \nabla_{\thetav}  \log p(\dv_{t_i} | \thetav_t ) $, is the sample mean of the gradient using mini-batch $\Dcal^t$, and $\alpha \in[0, 1]$. Operators $\odot$ and $ \oslash $ represent element-wise matrix product and division, respectively. 

RMSprop utilizes magnitudes of recent gradients to construct a preconditioner. 
Flatter landscapes have smaller gradients while curved landscapes have larger gradients. Gradient information is usually only locally consistent.
Therefore, two equivalent interpretations for Eq. \eqref{Eq:sgrld_update} can be reached intuitively: $\RN{1}$) the preconditioner equalizes the gradient so that a constant stepsize is adequate for all dimensions. $\RN{2}$) the stepsizes are adaptive, in that flat directions have larger stepsizes while curved directions have smaller stepsizes. 

In DNNs, saddle points are the most prevalent critical points, that can considerably slow down training~\cite{dauphin2015rmsprop}, mostly because the parameter space tends to be flat in many directions and ill-conditioned in the neighborhood of these saddle points. Standard SGLD will slowly escape the saddle point due to the typical oscillations along the high positive curvature direction. By transforming the landscape to be more equally curved, it is possible for the sampler to move much faster.

In addition, there are two tuning parameters: $\lambda$ controls the extremes of the curvature in the preconditioner (default $\lambda\!\!=\!\!10^{-5}$), and $\alpha$ balances the weights of historical and current gradients. We use a default value of $\alpha\!\!=\!\!0.99$ to construct an exponentially decaying sequence. Our Preconditioned SGLD with RMSprop is outlined in Algorithm~\ref{alg:sgld_rmsprop}.
\vspace{-8mm}
\begin{center}
	\begin{minipage}[t]{1.0\linewidth}
		\vspace{0pt}  
		\begin{algorithm}[H] 
			\caption{Preconditioned SGLD with RMSprop}\label{alg:sgld_rmsprop}
			\small
			\begin{algorithmic}
				\STATE {\bf Inputs:} $\{\epsilon_t\}_{t=1:T}, \lambda, \alpha$
				\STATE {\bf Outputs:} $\{\thetav_t\}_{t=1:T}$
				\STATE {\bf Initialize:} $\Vmat_0 \leftarrow {\bf 0}$, random $\thetav_1$
				\FOR{$t\leftarrow 1 : T$}
				\STATE Sample a minibatch of size $n$, $\Dcal_n^t=\{\dv_{t_1}, \dots, 
				\dv_{t_n} \}$
				\STATE Estimate gradient 
				$\bar{g}(\thetav_t; \Xmat^t ) = \frac{1}{n}\sum_{i=1}^{n} \nabla \log p(\dv_{t_i} | \thetav_t ) $

				\STATE $
				V( \thetav_{t} ) \leftarrow 
				\alpha V( \thetav_{t-1} )+ (1-\alpha)  \bar{g}(\thetav_t; \Dcal^t  ) \odot  \bar{g}(\thetav_t; \Dcal^t )   		
				$ 
				\STATE 
				$G( \thetav_{t} )\leftarrow  \text{diag} \left ({\bf 1}  \oslash \big (  \lambda {\bf 1} + \sqrt{V( \thetav_{t} ) } \big )  \right )$
				\STATE
				$
				\thetav_{t+1} \leftarrow  	\thetav_{t}  + 
				\frac{\epsilon_t}{2}   
				\Big [  G^{}(\thetav_t) 
				\Big (
				\nabla_{\thetav}  \log p(\thetav_t) 
				+N  \bar{g}(\thetav_t; \Dcal^t  )
				\Big )
				+ \Gamma(\thetav_t)
				\Big ]
				+  \Ncal(0, \epsilon_t G^{}(\thetav_t) )
				$
				\ENDFOR
			\end{algorithmic}
		\end{algorithm}
	\end{minipage}%
\end{center}

\vspace{-3mm}
\section{Preconditioned SGLD Algorithms in Practice}
This section first analyzes  the finite-time convergence properties of pSGLD, then proposes a more efficient variant for practical use.  We note that prior work gave similar theoretical results~\cite{chen2015integrator}, and we extend the theory to consider the use of preconditioners.
\vspace{-1mm}
\subsection{Finite-time Error Analysis}
For a bounded function $\phi(\thetav)$, 
we are often interested in its true posterior expectation $\bar{\phi} = \int_{\mathcal{X}} \phi(\thetav)  p (\thetav | \Dcal ) d \thetav$. For example, the class distribution of a data point in DNNs.
In our SG-MCMC based algorithm, this intractable integration is approximated by a weighted sample average 
$\hat{\phi} = \frac{1}{S_T} \sum_{t=1}^{T} \epsilon_t\phi(\thetav_t)$ at time $S_T = \sum_{t=1}^T \epsilon_t$,
with stepsizes $\{\epsilon_t\}$. These samples are generated from an MCMC algorithm with a numerical integrator ({\it e.g.},
our pSGLD algorithm) that discretizes the continuous-time Langevin dynamics.  The precision
of the true posterior average and its MCMC approximation is characterized by the expected difference between $\bar{\phi}$ and $\hat{\phi}$. We analyze the pSGLD algorithm by extending the work of \cite{TehThiVol2014a,chen2015integrator} to include
adaptive preconditioners.  We first show the asymptotic convergence properties of our algorithm in Theorem~\ref{theo:mse_decrease}
by the mean of the mean squared error (MSE)\footnote{This is different from the optimization literature where the {\em regret} is studied, which is not straightforward in the MCMC framework.}. To get the convergence result, some mild assumptions on the smoothness
and boundness of $\psi$, the solution functional of $\mathcal{L}\psi = \phi(\thetav_t) - \bar{\phi}$, is needed,
where $\mathcal{L}$ is the generator of corresponding stochastic differential equation for pSGLD. We discuss these conditions and prove the Theorem in Appendix A.

\begin{theorem}\label{theo:mse_decrease}
	Define the operator $\Delta V_t = (N \bar{g}(\thetav_t; \Dcal^t) - g(\thetav_t; \Dcal^t))^\top G^{} (\thetav_t) \nabla_{\thetav}$.
	Under Assumption~\ref{ass:stepsize_constraints}, for a test function $\phi$, the MSE 
	of the pSGLD at finite time 
	$S_T$ is bounded, for some $C > 0$ independent of $\{ \epsilon_t \}$, as:
	\begin{align}
	\text{MSE} : &
	\mathbb{E} \left[ \left(\hat{\phi} - \bar{\phi}\right)^2 \right]
	 \leq  \mathcal{B}_{\text{mse}}\label{eq:mse_decrease} \\
	& \triangleq C\left(\sum_t \frac{\epsilon_t^2}{S_T^2}\mathbb{E}\left\|\Delta V_t\right\|^2 + \frac{1}{S_T} + \frac{(\sum_{t=1}^T \epsilon_t^{2})^2}{S_T^2} \right)~. \nonumber
	\end{align}
\end{theorem}

MSE is a common measure of quality of an estimator, reflecting the precision of an approximate algorithm. 
It can be seen from Theorem~\ref{theo:mse_decrease} that the finite-time approximation error of pSGLD is bounded by $\mathcal{B}_{\text{mse}}$, consisting of two factors: $(\RN{1})$ estimation error from stochastic gradients, $\sum_t \frac{\epsilon_t^2}{S_T^2} \mathbb{E}\left\|\Delta V_t\right\|^2$, and $(\RN{2})$ discretization error inherited 
from numerical integrators,  $\frac{1}{S_T} + \frac{(\sum_{t=1}^T \epsilon_t^{2})^2}{S_T^2}$. These terms asymptotically approach $0$ under 
Assumption~\ref{ass:stepsize_constraints}, 
meaning that the decreasing-step-size pSGLD is asymptotically consistent with true posterior expectation.
\vspace{-5pt}
\subsection{Practical Techniques}
Of interest when considering the practical issue of limited computation time, we now interpret the above finite-time error using the 
framework of \textit{risk of an estimator}, which provides practical guidance in implementation. 
From \cite{korattikara2014austerity}, the predictive risk, $R$, of an algorithm is defined as the MSE
above, and can be decomposed as $R = \E [ ( \bar{\phi}  - \hat{\phi})^2] = B^2 + V$ , where $B$ is the bias 
and $V$ is the variance. Denote $\bar{\phi}_{\eta} = \int_{\mathcal{X}} \phi(\thetav)  \rho_{\eta}(\thetav) d \thetav$
as the ergodic average under the invariant measure, $ \rho_{\eta}(\thetav)$, of the pSGLD. After burnin, it can be 
shown that
\beqs
\text{Bias} &:& B = \bar{\phi}_{\eta} - \bar{\phi} \label{eq:bias}
\\
\text{Variance} &:& V = \E [ (  \bar{\phi}_{\eta} -  \hat{\phi})^2 ]
\approx  \frac{   A_{} (0) }{ M_{\eta} } \label{eq:varance}
\eeqs
where $ A_{} (0) $ is the variance of $\phi$ with respect to $\rho_{\eta}(\thetav)$ (\ie $\mathbb{E}_{\rho_{\eta}(\thetav)}[(\phi-\hat{\phi})^2]$) , which is a constant (further details are given in Appendix D).
$M_{\eta}$ is the \textit{effective sample size} (ESS), defined as
\beqs
\text{ESS} &:& M_{\eta} = 
\frac{ T }{ 1 + 2 \sum_{t=1}^{\infty} 	\frac{A( t ) }{A( 0 )  }  }
= \frac{ T }{  2 \tau }
\label{eq:ess}
\eeqs
where $A(t) = 
\E [ (  \bar{\phi}_{\eta} -  \phi(\thetav_0) )
(  \bar{\phi}_{\eta} -  \phi(\thetav_t) )
]$ is the \textit{autocovariance function}, manifesting how strong two samples with a time lag $t$ are correlated. The term $\tau=  \frac{1}{2} +  \sum_{t=1}^{\infty} 	\frac{A( t ) }{ A( 0 )  }   $ is the integrated \textit{autocorrelation time} (ACT), which measures the interval between independent samples.  

In practice, there is always a tradeoff between bias and variance.  In the case of infinite computation time, the traditional MCMC setting can reduce the bias and variance to zero.  However, in practice, time is limited. Obtaining more effective samples can reduce the total risk (Eq.~\eqref{eq:mse_decrease}), even if bias is introduced. In the following, we provide two model-independent practical techniques to further speed up the proposed pSGLD.
\vspace{-3mm}
\paragraph{Excluding $\Gamma(\thetav_t)$ term}
Though the evaluation of $\Gamma(\thetav_t)$ in our case is manageable due to its diagonal nature, we propose to remove it during sampling to reduce the computation.
It is interesting that in our case ignoring $\Gamma(\thetav_t)$ produces a bias controlled by $\alpha$ on the MSE.

\begin{corollary}\label{coro:gamma_term}
	Assume the 1st-order and 2nd-order gradients are bounded.  With the same assumptions as Theorem~\ref{theo:mse_decrease},
	the MSE when ignoring the $\Gamma(\thetav_t)$ term in the algorithm can be bounded as $\mathbb{E} \left[ \left(\hat{\phi} - \bar{\phi}\right)^2 \right] \leq \mathcal{B}_{\text{mse}} + O\left(\frac{(1 - \alpha)^2}{\alpha^3}\right)$, where $\mathcal{B}_{\text{mse}}$ is the bound defined in Theorem~\ref{theo:mse_decrease}.
\end{corollary}

Omitting $\Gamma(\thetav_t)$ introduces an extra term in the bound that is controlled by the parameter $\alpha$.  
The proof is in Appendix B. Since  $\alpha$ is always set to a value that is very close to $1$, the term $ (1- \alpha)^2/ \alpha^3  \approx 0$, the effect of $\Gamma(\thetav_t)$ negligible. In addition, more samples per unit time are generated when  $\Gamma(\thetav_t)$ is ignored, 
resulting in a smaller variance on the prediction. Note that the term $\Gamma(\thetav_t)$ is heuristically ignored in~\cite{ahn2012bayesian}, but is only able to approximate the true posterior in the case of infinite data, which is not required in our algorithm. 

\vspace{-3mm}
\paragraph{Thinning samples}
Making predictions using a whole ensemble of models is cumbersome and may be too 
computationally expensive to allow deployment for a large number of users, especially when 
the models are large neural nets. One practical technique is to average models using a thinned
version of samples. By thinning the samples in pSGLD, the total number of samples is reduced.  However, these thinned samples have a lower autocorrelation time and can have a similar ESS.  We can also guarantee the MSE remains the same form under the thinning schema. The proof is in Appendix C.


\begin{corollary}\label{coro:thin}
	By thinning samples from our pSGLD algorithm, the MSE remains the same form as in Theorem~\ref{theo:mse_decrease},
	and asymptotically approaches 0. 
\end{corollary}

%
%
%

\vspace{-10pt}
\section{Experiments}

Our experiments focus on the effectiveness of preconditioners in pSGLD, and present results in four parts: a simple simulation, Bayesian Logistic Regression (BLR), and two widely used DNN models, Feedforward Neural Networks (FNN) and Convolutional Neural Networks (CNN). 

The proposed algorithm that uses the discussed practical techniques is denoted as \textit{pSGLD}. The prior on the parameters is set to $p(\thetav) = \Ncal(0, \sigma^2 \Imat )$. If not specifically mentioned, the default setting for DNN experiments is shared as follows. $\sigma^2 = 1 $, minibatch size is 100, thinning interval is 100, burn-in is 300. We employ a block decay strategy for stepsize; it decreases by half after every $L$ epochs.

\vspace{-3pt}
\subsection{Simulation}
We first demonstrate pSGLD on a simple 2D Gaussian example,
{\tiny	$\Ncal \Big( \left [ \begin{array}{c} 0 \\ 0\end{array}  \right ], 
	\left [ \begin{array}{cc} 0.16&0 \\ 0&1 \end{array} \right ] 
	\Big)$}. Given posterior samples, the goal is to estimate the covariance matrix. A diagonal covariance matrix is used to show the algorithm can adjust the stepsize at different dimension.

We first compare SGLD and pSGLD with a large range of different stepsizes $\epsilon$. $2\times10^5$ samples are collected for each algorithm. Reconstruction errors and autocorrelation time are shown in Fig.~\ref{fig:Gaussian} (a). 
We see that pSGLD dominates the ``vanilla'' SGLD in that it consistently shows a lower error and autocorrelation time, particularly with larger stepsize. When the stepsize is small enough, the sampler does not move much, and the performances of the two algorithms become similar. The first $600$ samples of both methods for  $\epsilon = 0.3$ are shown in Fig.~\ref{fig:Gaussian} (b). Because step sizes in pSGLD
can be adaptive, it implies that even if the covariance matrix of a target distribution is mildly rescaled, a new stepsize is unnecessary for pSGLD. Meanwhile, the stepsize of the vanilla SGLD needs to be fine-tuned in order to obtain decent samples.  See Appendix E for further details.
\vspace{-10pt}
\begin{figure}[h] \centering
	\begin{tabular}{c c} \hspace{-0.3cm}
		\includegraphics[width=4.2cm]{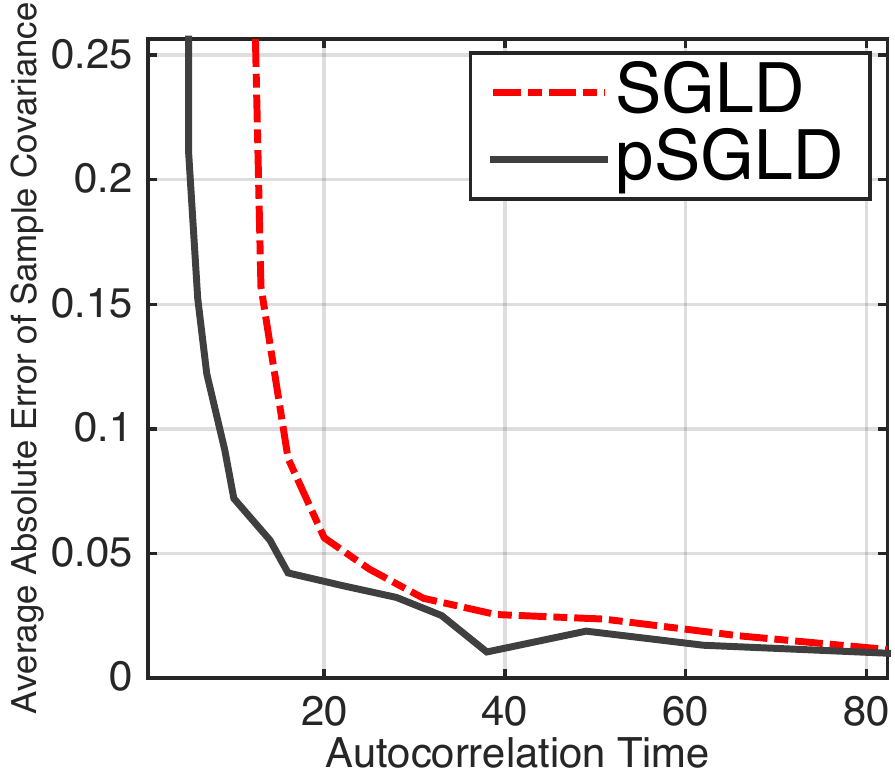} \vspace{5pt}&  \hspace{-0.5cm}
		\includegraphics[width=4.1cm]{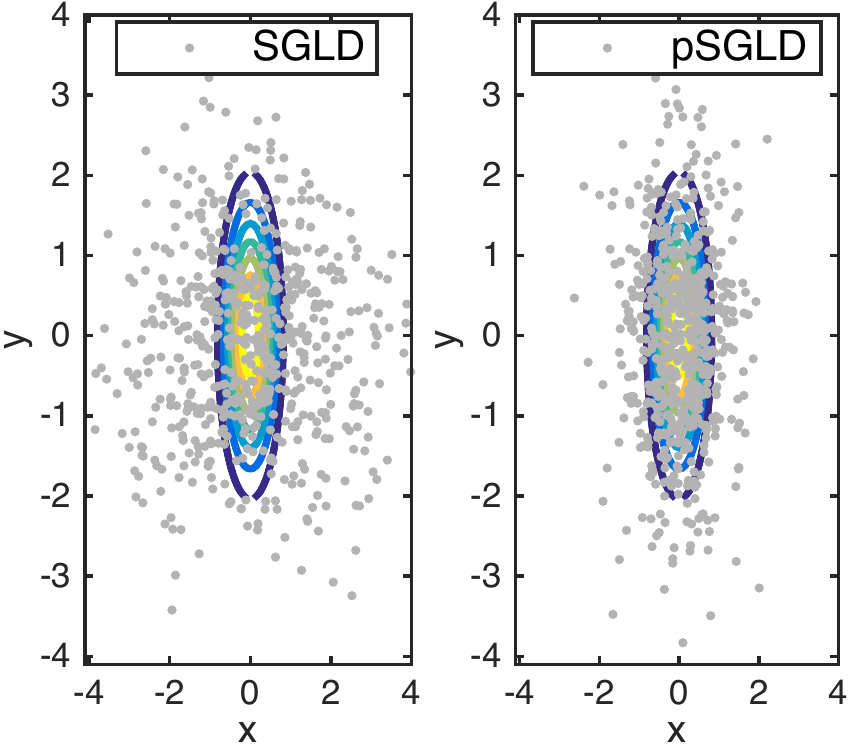}
		\vspace{-3pt}
		 \\ \vspace{-0mm}
		(a) Error and ACT
		&
		(b) Samples
	\end{tabular}
	\vspace{-4mm}
	\caption{Simulation results on a 2D Gaussian.}
	\label{fig:Gaussian}
	\vspace{-18pt}
\end{figure}
\vspace{-3pt}
\subsection{Bayesian Logstic Regression} 

To demonstrate that our pSGLD is applicable to general Bayesian posterior sampling, we demonstrate results on BLR. A small Australian dataset~\cite{girolami2011riemann} is first used with $N= 690$ and dimension $D= 14$. We choose a minibatch size of $5$.  The prior variance is $\sigma^2 = 100$.  $5\times10^3$ iterations are used. For both pSGLD and SGLD, we test stepsize  $\epsilon$ ranging from $1 \times 10^{-7} $ to $ 1 \times 10^{-4} $, with 50 runs for each algorithm.
\vspace{-12pt}
\begin{figure}[h] \centering
	\begin{tabular}{c c} \hspace{-0.1cm}
		\includegraphics[width=3.7cm]{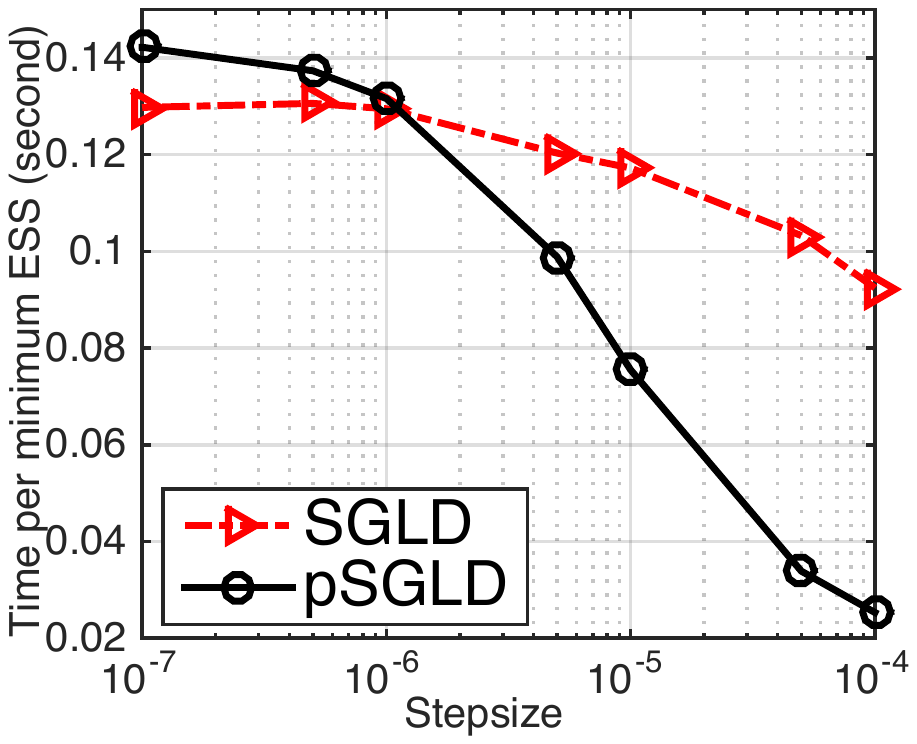} \vspace{5pt}&  \hspace{-3mm}
		\includegraphics[width=3.6cm]{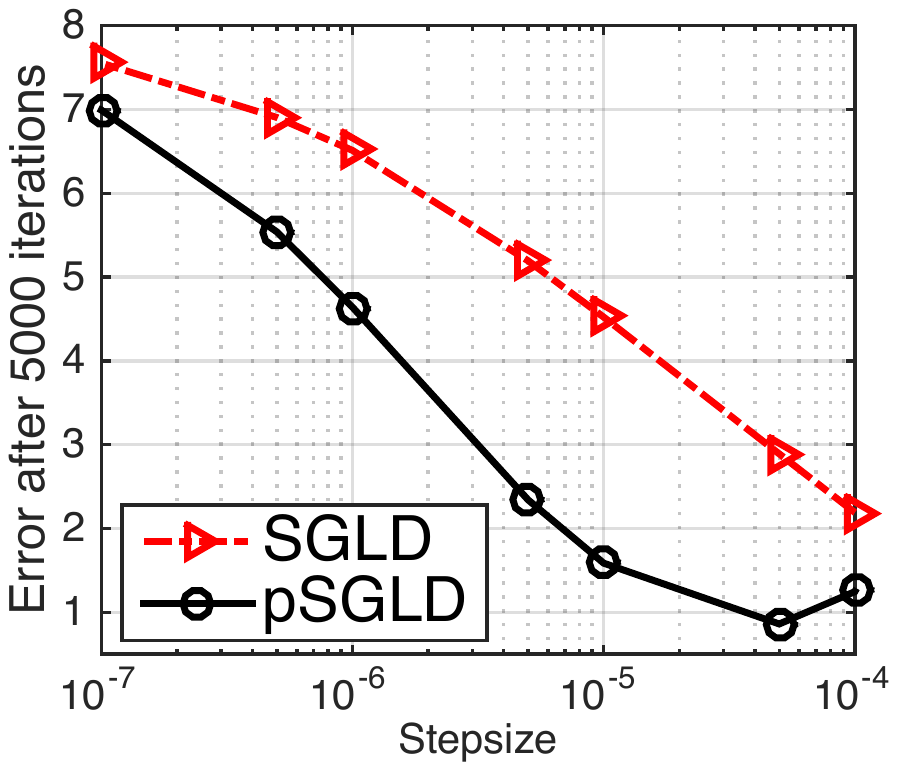}
		\vspace{-8pt}
		\\ \vspace{-0mm}
		(a) Variance
		&
		(b) Parameter estimation
	\end{tabular}
	\vspace{-10pt}
	\caption{BLR on Australian dataset.}
	\label{fig:blr_ess}
	\vspace{-12pt}
\end{figure}
%

Following~\cite{girolami2011riemann}, we report the time per minimum Effective Sample ($\propto1/\text{EES}$) in Fig.~\ref{fig:blr_ess} (a), which is proportional to the variance. pSGLD generates much larger ESS compared to SGLD, especially when the stepsize is large. Meanwhile, Fig.~\ref{fig:blr_ess} (b) shows that pSGLD provides smaller error in estimating weights, where the ``groundtruth'' is obtained by $10^6$ samples from HMC with Metroplis-Hastings.
Therefore, the overall risk is reduced. 




We then test BLR on a large-scale Adult dataset, $\mathtt{a9a}$~\cite{lin2008trust}, with $N_{\text{train}}= 32561, N_{\text{test}}=16281,$ and $D=123$.
Minibatch size is set to 50, and the prior variance is $\sigma^2 = 10$.
The thinning interval is 50, burn-in is 500, and $T = 1.5\times10^4$ iterations are used.			
Stepsize $\epsilon = 5 \times 10^{-2} $ for pSGLD and  SGLD. The test errors are compared in Table~1\ref{tab:a9a_blr}, and learning curves are shown in Fig.~3\ref{fig:a9a_blr}. Both SG-MCMC methods outperform the recently proposed doubly stochastic variational Bayes (SDVI)~\cite{titsias2014doubly}, and higher-order variational autoencoder methods (L-BFGS-SGVI, HFSGVI)~\cite{fan2015secondVAE}. Furthermore, pSGLD converges in less than $4\times10^3$ iterations, while SGLD at least needs double the time to reach this accuracy.


\vspace{-2pt}
\hspace{-10pt}
\begin{minipage}{1\linewidth}\centering
	\hspace{-5pt}
	\begin{minipage}[b]{0.45\linewidth}
		\centering
		\small
		\begin{tabular}{cc}
			\hline
			Method & Test error \\
			\hline 
			pSGLD     &  {\bf 14.85\%}\\
			SGLD	   &  {\bf 14.85\%}\\		  
			\hline			
			DSVI   &  15.20\%\\		 	
			L-BFGS-SGVI   &  14.91\%\\		 	
			HFSGVI   &  15.16\%\\		 	
			\hline 
		\end{tabular}
		\vspace{5pt}
		\label{tab:a9a_blr}
		\captionof{table}{BLR on $\mathtt{a9a}$.}
	\end{minipage}
	\hspace{5mm}
	\begin{minipage}[b]{0.48\linewidth}
		\centering
		\includegraphics[width=4.05cm]{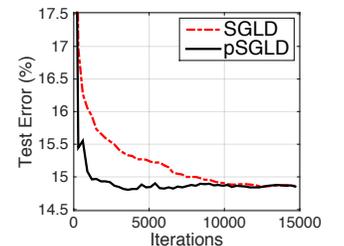}
		\label{fig:a9a_blr}
		\vspace{-20pt}
	    \captionof{figure}{Learning curves.}
	\end{minipage}
\end{minipage}


\vspace{-5pt}
\subsection{Feedforward Neural Networks}

The first DNN model we study is the Feedforward Neural Networks (FNN), or multilayer perceptron (MLP). The activation function is rectified linear unit (ReLU). A two-layer model, 784-X-X-10, is employed, where X is the number of hidden units for each layer.  100 epochs are used, with $L=20$.
We compare our propose method, pSGLD, with representative stochastic optimization methods: SGD, RMSprop and RMSspectral~\cite{carlson2015pssd}. After tuning, we set the optimal stepsize for each algorithm as: for pSGLD and RSMprop as follows: $\epsilon = 5 \times 10^{-4} $, while for SGLD and SGD as $\epsilon = 5 \times 10^{-1} $. 

We test the algorithms on the standard MNIST dataset, consisting of $28 \times 28$ images (thus the 784-dimensional input vector) from $10$ different classes ($0$ to $9$) with $60,000$ training and $10,000$ test samples. The test classification errors for network (X-X) size  400-400, 800-800 and 1200-1200 are shown in Table~\ref{tab:fnn}. The results  of stochastic sampling methods are better than their corresponding stochastic optimization counterparts. This indicates that incorporating weight uncertainty can improve performances. By increasing the variance $ \sigma^2 $ of pSGLD from  $1$  to $100$, more uncertainty is introduced into the model from the prior, and higher performance is obtained. Figure~\ref{fig:fnn1200} (a) displays the distribution histograms of weights in the last training iteration of the 1200-1200 model. We observe that smaller variance in the prior imposes lower uncertainty, by making the weights concentrate to $0$; while larger variance in the prior leads to a wider range of weight choices, thus higher uncertainty.

\begin{table} 
	\centering
	\caption{Classification error of FNN on MNIST. [ ${\diamond}$ ] indicates results taken from~\cite{blundell2015weight}	}\vspace{-0.3cm}
	\begin{minipage}{0.95\linewidth}
		\vskip 0.1in
		\centering
		\begin{adjustbox}{scale=0.9,tabular=llccc,center}
			\hline
			\multirow{2}{*}{Method } & \multicolumn{3}{c}{Test Error}  \\
			& 400-400 & 800-800  & 1200-1200 \\
			\hline 			 
			pSGLD  ($ \sigma^2 = 100 $)  & {\bf 1.40\%} & {\bf  1.26\%} & {\bf  1.14\%}\\		
			pSGLD 	($ \sigma^2 = 1 $)   & 1.45\% & {1.32\%}    &  {1.24\%} \\
			distilled pSGLD   & 1.44\% & {1.40\%}    &  {1.41\%} \\
			\hline 				
			SGLD	    	 &  1.64\% &    1.41\% &  1.40\%\\				  			
			RMSprop  	   &  1.59\% &    1.43\%  &  1.39\%\\	
			RMSspectral 	   		  &  1.65\% &    1.56\%  &   1.46\%\\			
			SGD 	   		  &   1.72\% &   1.47\%   &  1.47\%\\
			\hline  
			BPB, Gaussian$^\diamond$    &  1.82\% & 1.99\%      &  2.04\%\\
			BPB, Scale mixture$^\diamond$  &  1.32\% & 1.34\%  & 1.32\%\\			
			SGD, dropout$^\diamond$   & 1.51\%  & 1.33\%         & 1.36\%\\
			\hline
		\end{adjustbox}
		\label{tab:fnn}
	\end{minipage}
	\vspace{-10pt}
\end{table}

\begin{figure}[t!] \centering
	\hspace{-3mm}
	\begin{tabular}{c c}
		\includegraphics[height=3.2cm, width=4.1cm]{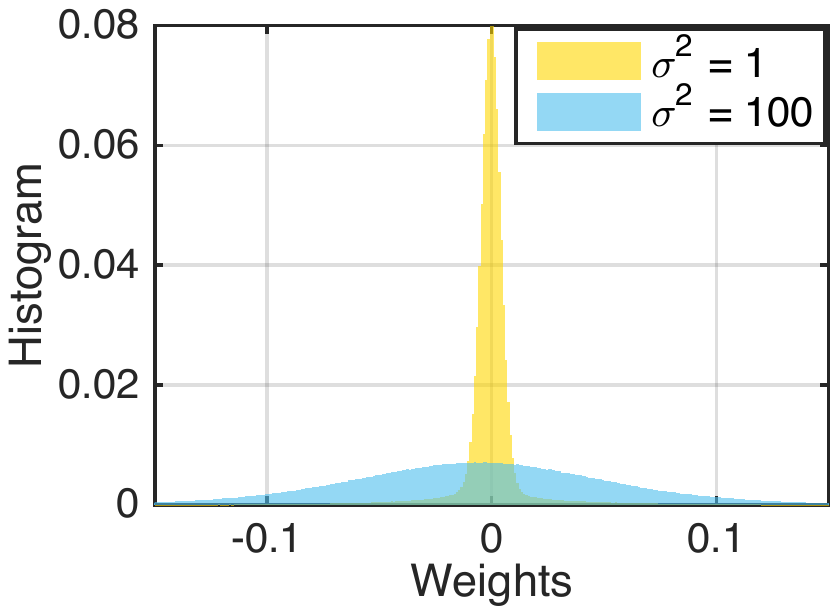} & \hspace{-3mm}
		\includegraphics[height=3.2cm, width=3.9cm]{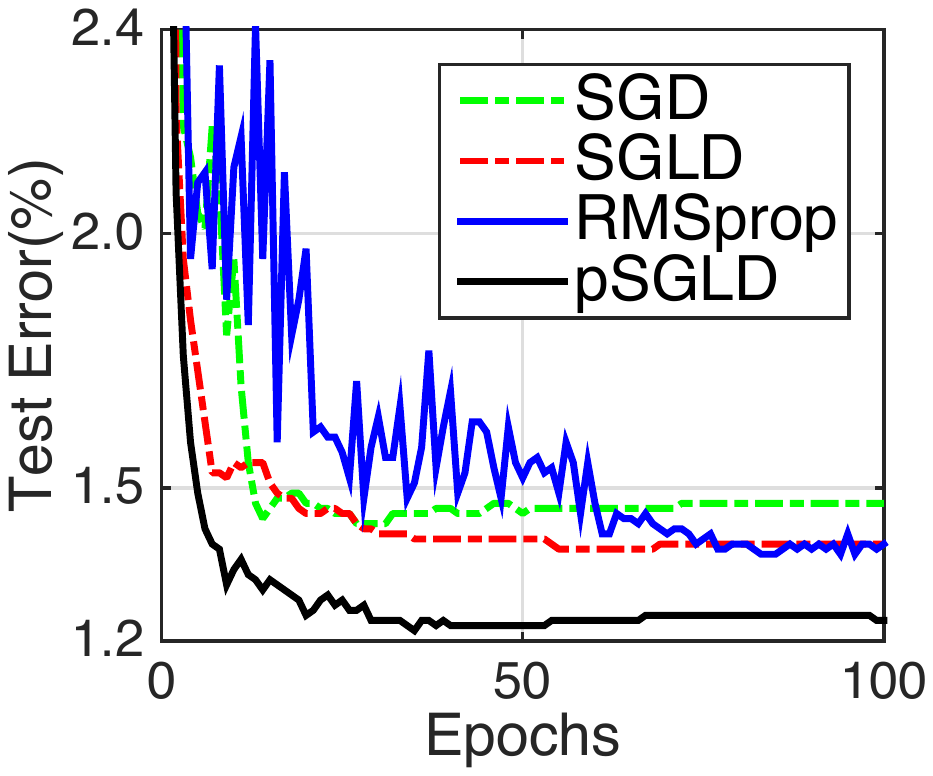} \\
		(a) Weights distribution & (b) Learning curves
	\end{tabular} \vspace{-3mm}
	\caption{FNN of size 1200-1200 on MNIST.}
	\label{fig:fnn1200}
	\vspace{-10pt}
\end{figure}

We also compare to other techniques developed to prevent overfitting (dropout) and weight uncertainty (BPB, Gaussian and scale mixtures).  pSGLD provides state-of-the-art performance for FNN on test accuracy. We further note that pSGLD is able to give increasing performance with increasing network size, whereas BPB and SGD dropout do not. This is probably because overfitting is harder to be dealt with in large neural networks with pure optimization techniques.

Finally, learning curves of network configuration 1200-1200 are plot in Fig.~\ref{fig:fnn1200} (b)\footnote{RMSspectral is not shown because it uses larger batch sizes and so is difficult to compare on this scale.}.
We empirically find that pSGLD and SGLD take fewer iterations to converge, and the results are more stable than their optimization counterparts.
Moreover, it can be seen that pSGLD consistently converges faster and to a better point than SGLD.  Learning curves for other network sizes are provided in Appendix F.  While the ensemble of samples requires more computation than a single FNN in testing, it shows significantly improved performance. As well, \cite{korattikara2015bayesian} showed that learning a single FNN that approximates the model average result gave nearly the same performance. We employ this idea, and suggest a fast version, {\em distilled pSGLD}. Its results for $ \sigma^2 = 1$ show it can maintain good performances.

\vspace{-5pt}
\subsection{Convolutional Neural Networks}
Our next DNN is the popular CNN model. We use a standard network configuration with 2 convolutional layers followed by 2 fully-connected layers~\cite{jarrett2009best}. Both convolutional layers use $5 \times 5$ filter size with 32 and 64 channels, respectively; $2 \times 2$ max pooling is used after each convolutional layer. The fully-connected layers have 200-200 hidden nodes with ReLU nonlinearities,
20 epochs are used, and $L$ is set to 10. The stepsizes for pSGLD and RMSprop is set to $\epsilon = \{1, 2\} \times 10^{-3} $ via grid search. 
For SGLD and SGD, this is $\epsilon =  \{1, 2\} \times 10^{-1} $. 
Additional results with CNNs are in Appendix G.

The same MNIST dataset is used. A comparison of test errors is shown in Table~3\ref{tab:cnn}, with the corresponding learning curves in Fig.~5\ref{fig:cnn}. We emphasize that the purpose of this experiment is to compare methods on the same model architecture, not to achieve overall state-of-the-art results.  The CNN trained with traditional SGD gives an error of 0.82\%.  pSGLD shows significant improvement, with an error of 0.45\%. This result is also comparable with some recent state-of-the-art CNN based systems, which have much more complex architectures. These include the stochastic pooling~\cite{zeiler2013stochastic}, Network in Network (NIN)~\cite{LinCY14NIN} and Maxout Network(MN)~\cite{goodfellow2013maxout}.

\vspace{-0mm}
\begin{minipage}{0.48\textwidth}\centering
	\hspace{-5mm} \vspace{0mm}
 	\begin{minipage}[b]{0.40\textwidth}
 		\centering
 		\small
		\begin{tabular}{cc}
			\hline
			Method & Test error \\
			\hline
			pSGLD                   & {\bf 0.45\%} \\
			SGLD				     &  0.71\%\\		  			
			RMSprop	    		   &  0.65\%\\
			RMSspectral & 0.78\%\\
			SGD				          &  0.82\%\\	\hline
			Stochastic Pooling		&  0.47\%\\	
			NIN + Dropout	&  0.47\%\\			 	
			MN + Dropout		&  0.45\%\\	
			\hline
		\end{tabular}
		\vspace{0mm}
		\label{tab:cnn}
 		\captionof{table}{Test error.}
 	\end{minipage}
 	 	\hfill \hfill
 	\begin{minipage}[b]{0.55\textwidth}
 		\centering
 		\includegraphics[width=4.05cm]{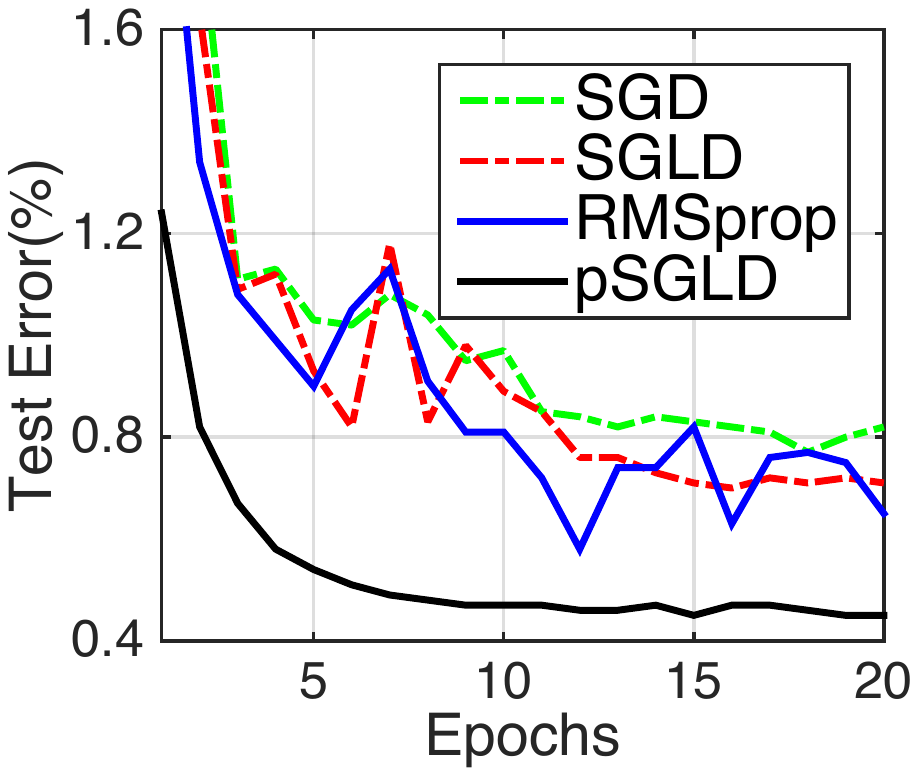}
 		\label{fig:cnn}
 		\vspace{-2mm}
 		\captionof{figure}{Learning curves.}
 	\end{minipage}
\end{minipage}

\vspace{-0pt}
\section{Conclusion}
A preconditioned SGLD is developed based on the RMSprop algorithm, with controllable finite-time approximation error. We apply the algorithm to DNNs to overcome their notorious problems of overfitting and pathological curvature. Extensive experiments show that our pSGLD can adaptive to the local geometry, allowing improved effective sampling rates and performance.  It provides sample-based uncertainty in DNNs, and achieves state-of-the-arts performances on FNN and CNN models.
%
Interesting future directions include exploring applications to latent variable models or recurrent neural networks~\cite{gan2015deep}.

\newpage
\paragraph{Acknowledgements} This research was supported in part by ARO, DARPA, DOE, NGA, ONR and NSF.

\bibliographystyle{aaai}	
\bibliography{subtex/references.bib}

\begin{thebibliography}{}

\bibitem[\protect\citeauthoryear{Ahn, Korattikara, and
  Welling}{2012}]{ahn2012bayesian}
Ahn, S.; Korattikara, A.; and Welling, M.
\newblock 2012.
\newblock Bayesian posterior sampling via stochastic gradient fisher scoring.
\newblock In {\em ICML}.

\bibitem[\protect\citeauthoryear{Bakhturin}{2001}]{Bakhturin01}
Bakhturin, Y.~A.
\newblock 2001.
\newblock {\em Campbell--Hausdorff formula}.
\newblock Encyclopedia of Mathematics, Springer.

\bibitem[\protect\citeauthoryear{Blundell \bgroup et al\mbox.\egroup
  }{2015}]{blundell2015weight}
Blundell, C.; Cornebise, J.; Kavukcuoglu, K.; and Wierstra, D.
\newblock 2015.
\newblock Weight uncertainty in neural networks.
\newblock In {\em ICML}.

\bibitem[\protect\citeauthoryear{Bottou}{2004}]{Bottou:04}
Bottou, L.
\newblock 2004.
\newblock Stochastic learning.
\newblock {\em Advanced Lectures on Machine Learning}  146--168.

\bibitem[\protect\citeauthoryear{Carlson \bgroup et al\mbox.\egroup
  }{2015}]{carlson2015pssd}
Carlson, D.; Collins, E.; Hsieh, Y.~P.; Carin, L.; and Cevher, V.
\newblock 2015.
\newblock Preconditioned spectral descent for deep learning.
\newblock In {\em NIPS}.

\bibitem[\protect\citeauthoryear{Chen, Ding, and
  Carin}{2015}]{chen2015integrator}
Chen, C.; Ding, N.; and Carin, L.
\newblock 2015.
\newblock On the convergence of stochastic gradient {MCMC} algorithms with
  high-order integrators.
\newblock In {\em NIPS}.

\bibitem[\protect\citeauthoryear{Chen, Fox, and Guestrin}{2014}]{ChenFG:ICML14}
Chen, T.; Fox, E.~B.; and Guestrin, C.
\newblock 2014.
\newblock Stochastic gradient {H}amiltonian {M}onte {C}arlo.
\newblock In {\em ICML}.

\bibitem[\protect\citeauthoryear{Chen-Yu \bgroup et al\mbox.\egroup
  }{2015}]{lee2015deeply}
Chen-Yu, L.; Saining, X.; Patrick, G.; Zhengyou, Z.; and Zhuowen, T.
\newblock 2015.
\newblock Deeply-supervised nets.
\newblock {\em AISTATS}.

\bibitem[\protect\citeauthoryear{Dauphin \bgroup et al\mbox.\egroup
  }{2014}]{dauphin2014identifying}
Dauphin, Y.~N.; Pascanu, R.; Gulcehre, C.; Cho, K.; Ganguli, S.; and Bengio, Y.
\newblock 2014.
\newblock Identifying and attacking the saddle point problem in
  high-dimensional non-convex optimization.
\newblock In {\em NIPS}.

\bibitem[\protect\citeauthoryear{Dauphin, de Vries, and
  Bengio}{2015}]{dauphin2015rmsprop}
Dauphin, Y.~N.; de~Vries, H.; and Bengio, Y.
\newblock 2015.
\newblock Equilibrated adaptive learning rates for non-convex optimization.
\newblock In {\em NIPS}.

\bibitem[\protect\citeauthoryear{Ding \bgroup et al\mbox.\egroup
  }{2014}]{DingFBCSN:NIPS14}
Ding, N.; Fang, Y.; Babbush, R.; Chen, C.; Skeel, R.~D.; and Neven, H.
\newblock 2014.
\newblock Bayesian sampling using stochastic gradient thermostats.
\newblock In {\em NIPS}.

\bibitem[\protect\citeauthoryear{Duchi, Hazan, and
  Singer}{2011}]{duchi2011adaptive}
Duchi, J.; Hazan, E.; and Singer, Y.
\newblock 2011.
\newblock Adaptive subgradient methods for online learning and stochastic
  optimization.
\newblock {\em JMLR}.

\bibitem[\protect\citeauthoryear{Fan \bgroup et al\mbox.\egroup
  }{2015}]{fan2015secondVAE}
Fan, K.; Wang, Z.; Beck, J.; Kwok, J.; and Heller, J.
\newblock 2015.
\newblock Fast second-order stochastic backpropagation for variational
  inference.
\newblock In {\em NIPS}.

\bibitem[\protect\citeauthoryear{Gal and Ghahramani}{2015}]{Gal2015DropoutB}
Gal, Y., and Ghahramani, Z.
\newblock 2015.
\newblock Dropout as a {B}ayesian approximation: Representing model uncertainty
  in deep learning.
\newblock {\em arXiv:1506.02142}.

\bibitem[\protect\citeauthoryear{Gamerman and Lopes}{2006}]{gamerman2006markov}
Gamerman, D., and Lopes, H.~F.
\newblock 2006.
\newblock {\em Markov chain Monte Carlo: stochastic simulation for Bayesian
  inference}.
\newblock CRC Press.

\bibitem[\protect\citeauthoryear{Gan \bgroup et al\mbox.\egroup
  }{2015}]{gan2015deep}
Gan, Z.; Li, C.; Henao, R.; Carlson, D.; and Carin, L.
\newblock 2015.
\newblock Deep temporal sigmoid belief networks for sequence modeling.
\newblock {\em NIPS}.

\bibitem[\protect\citeauthoryear{Girolami and
  Calderhead}{2011}]{girolami2011riemann}
Girolami, M., and Calderhead, B.
\newblock 2011.
\newblock Riemann manifold langevin and hamiltonian monte carlo methods.
\newblock In {\em JRSS: Series B}.

\bibitem[\protect\citeauthoryear{Goodfellow \bgroup et al\mbox.\egroup
  }{2013}]{goodfellow2013maxout}
Goodfellow, I.; Warde-farley, D.; Mirza, M.; Courville, A.; and Bengio, Y.
\newblock 2013.
\newblock Maxout networks.
\newblock In {\em ICML}.

\bibitem[\protect\citeauthoryear{Hern\'{a}ndez-Lobato and
  Adams}{2015}]{hernandez2015probabilistic}
Hern\'{a}ndez-Lobato, J.~M., and Adams, R.~P.
\newblock 2015.
\newblock Probabilistic backpropagation for scalable learning of bayesian
  neural networks.
\newblock In {\em ICML}.

\bibitem[\protect\citeauthoryear{Jarrett \bgroup et al\mbox.\egroup
  }{2009}]{jarrett2009best}
Jarrett, K.; Kavukcuoglu, K.; Ranzato, M.; and LeCun, Y.
\newblock 2009.
\newblock What is the best multi-stage architecture for object recognition?
\newblock In {\em ICCV}.

\bibitem[\protect\citeauthoryear{Kingma and Ba}{2015}]{kingma2014adam}
Kingma, D., and Ba, J.
\newblock 2015.
\newblock Adam: A method for stochastic optimization.
\newblock {\em ICLR}.

\bibitem[\protect\citeauthoryear{Kingma, Salimans, and
  Welling}{2015}]{kingma2015variational}
Kingma, D.~P.; Salimans, T.; and Welling, M.
\newblock 2015.
\newblock Variational dropout and the local reparameterization trick.
\newblock {\em NIPS}.

\bibitem[\protect\citeauthoryear{Korattikara \bgroup et al\mbox.\egroup
  }{2015}]{korattikara2015bayesian}
Korattikara, A.; Rathod, V.; Murphy, K.; and Welling, M.
\newblock 2015.
\newblock Bayesian dark knowledge.
\newblock {\em NIPS}.

\bibitem[\protect\citeauthoryear{Korattikara, Chen, and
  Welling}{2014}]{korattikara2014austerity}
Korattikara, A.; Chen, Y.; and Welling, M.
\newblock 2014.
\newblock Austerity in {MCMC} land: Cutting the {M}etropolis-{H}astings budget.
\newblock {\em ICML}.

\bibitem[\protect\citeauthoryear{Krizhevsky and
  Hinton}{2009}]{krizhevsky2009learning}
Krizhevsky, A., and Hinton, G.
\newblock 2009.
\newblock Learning multiple layers of features from tiny images.

\bibitem[\protect\citeauthoryear{Krizhevsky, Sutskever, and
  Hinton}{2012}]{krizhevsky2012imagenet}
Krizhevsky, A.; Sutskever, I.; and Hinton, G.~E.
\newblock 2012.
\newblock Imagenet classification with deep convolutional neural networks.
\newblock In {\em NIPS}.

\bibitem[\protect\citeauthoryear{Li \bgroup et al\mbox.\egroup
  }{2016}]{li2016thermostats}
Li, C.; Chen, C.; Fan, K.; and Carin, L.
\newblock 2016.
\newblock High-order stochastic gradient thermostats for {B}ayesian learning of
  deep models.
\newblock In {\em AAAI}.

\bibitem[\protect\citeauthoryear{Lin, Chen, and Yan}{2014}]{LinCY14NIN}
Lin, M.; Chen, Q.; and Yan, S.
\newblock 2014.
\newblock Network in network.
\newblock {\em ICLR}.

\bibitem[\protect\citeauthoryear{Lin, Weng, and Keerthi}{2008}]{lin2008trust}
Lin, C.-J.; Weng, R.~C.; and Keerthi, S.~S.
\newblock 2008.
\newblock Trust region newton method for logistic regression.
\newblock {\em JMLR}.

\bibitem[\protect\citeauthoryear{MacKay}{1992}]{mackay1992practical}
MacKay, D. J.~C.
\newblock 1992.
\newblock A practical bayesian framework for backpropagation networks.
\newblock {\em Neural computation}.

\bibitem[\protect\citeauthoryear{Mattingly, Stuart, and
  Tretyakov}{2010}]{MattinglyST:JNA10}
Mattingly, J.~C.; Stuart, A.~M.; and Tretyakov, M.~V.
\newblock 2010.
\newblock Construction of numerical time-average and stationary measures via
  {P}oisson equations.
\newblock {\em SIAM J. NUMER. ANAL.} 48(2):552--577.

\bibitem[\protect\citeauthoryear{Neal}{1995}]{neal1995bayesian}
Neal, R.~M.
\newblock 1995.
\newblock {\em Bayesian learning for neural networks}.
\newblock PhD thesis, University of Toronto.

\bibitem[\protect\citeauthoryear{Neal}{2011}]{neal2011mcmc}
Neal, R.~M.
\newblock 2011.
\newblock {MCMC} using {H}amiltonian dynamics.
\newblock {\em Handbook of {M}arkov Chain {M}onte {C}arlo}.

\bibitem[\protect\citeauthoryear{Ngiam \bgroup et al\mbox.\egroup
  }{2011}]{ngiam2011optimization}
Ngiam, J.; Coates, A.; Lahiri, A.; Prochnow, B.; Le, Q.~V.; and Ng, A.~Y.
\newblock 2011.
\newblock On optimization methods for deep learning.
\newblock In {\em ICML}.

\bibitem[\protect\citeauthoryear{Patterson and
  Teh}{2013}]{patterson2013stochastic}
Patterson, S., and Teh, Y.~W.
\newblock 2013.
\newblock Stochastic gradient {R}iemannian {L}angevin dynamics on the
  probability simplex.
\newblock In {\em NIPS}.

\bibitem[\protect\citeauthoryear{Rumelhart, Hinton, and
  Williams}{1986}]{williams1986learning}
Rumelhart, D.~E.; Hinton, G.~E.; and Williams, R.
\newblock 1986.
\newblock Learning representations by back-propagating errors.
\newblock {\em Nature}.

\bibitem[\protect\citeauthoryear{Srivastava \bgroup et al\mbox.\egroup
  }{2014}]{srivastava2014dropout}
Srivastava, N.; Hinton, G.; Krizhevsky, A.; Sutskever, I.; and Salakhutdinov,
  R.
\newblock 2014.
\newblock Dropout: A simple way to prevent neural networks from overfitting.
\newblock {\em JMLR}.

\bibitem[\protect\citeauthoryear{Sutskever, Vinyals, and
  Le}{2014}]{sutskever2014sequence}
Sutskever, I.; Vinyals, O.; and Le, Q.~V.
\newblock 2014.
\newblock Sequence to sequence learning with neural networks.
\newblock In {\em NIPS}.

\bibitem[\protect\citeauthoryear{Teh, Thi\'ery, and
  Vollmer}{2014}]{TehThiVol2014a}
Teh, Y.~W.; Thi\'ery, A.~H.; and Vollmer, S.~J.
\newblock 2014.
\newblock Consistency and fluctuations for stochastic gradient {L}angevin
  dynamics.

\bibitem[\protect\citeauthoryear{Tieleman and
  Hinton}{2012}]{tieleman2012lecture}
Tieleman, T., and Hinton, G.~E.
\newblock 2012.
\newblock Lecture 6.5-rmsprop: Divide the gradient by a running average of its
  recent magnitude.
\newblock {\em Coursera: Neural Networks for Machine Learning}.

\bibitem[\protect\citeauthoryear{Titsias and
  L{\'a}zaro-Gredilla}{2014}]{titsias2014doubly}
Titsias, M., and L{\'a}zaro-Gredilla, M.
\newblock 2014.
\newblock Doubly stochastic variational bayes for non-conjugate inference.
\newblock In {\em ICML}.

\bibitem[\protect\citeauthoryear{Wan \bgroup et al\mbox.\egroup
  }{2013}]{wan2013regularization}
Wan, L.; Zeiler, M.; Zhang, S.; LeCun, Y.; and Fergus, R.
\newblock 2013.
\newblock Regularization of neural networks using dropconnect.
\newblock In {\em ICML}.

\bibitem[\protect\citeauthoryear{Welling and Teh}{2011}]{welling2011bayesian}
Welling, M., and Teh, Y.~W.
\newblock 2011.
\newblock Bayesian learning via stochastic gradient {L}angevin dynamics.
\newblock In {\em ICML}.

\bibitem[\protect\citeauthoryear{Zeiler and
  Fergus}{2013}]{zeiler2013stochastic}
Zeiler, M., and Fergus, R.
\newblock 2013.
\newblock Stochastic pooling for regularization of deep convolutional neural
  networks.
\newblock {\em ICLR}.

\bibitem[\protect\citeauthoryear{Zeiler}{2012}]{zeiler2012adadelta}
Zeiler, M.~D.
\newblock 2012.
\newblock Adadelta: An adaptive learning rate method.
\newblock {\em arXiv:1212.5701}.

\end{thebibliography}

\newpage

\appendix
%
%
%
%
%
%
%

\twocolumn[
\begin{center}
	\bf{\LARGE Supplementary Material of \\Preconditioned Stochastic Gradient Langevin Dynamics for \\Deep Neural Networks\\ \vspace{1em}}
\end{center}
]

\section{A.$\quad$The proof for main theorem}
In \cite{chen2015integrator}, the authors provide the convergence property for general SG-MCMC, here we follow their assumptions and proof techniques, with specific treatment on the 1st-order numerical integrator, and the case of preconditioner.

\subsection{Details on the assumption}
Before the proof, we detail the assumptions needed for Theorem~1. 
For pSGLD, its associated Stochastic Differential Equation (SDE)  has an invariant measure $\rho(\thetav)$, the posterior average is defined as:
$\bar{\phi} \triangleq \int_{\mathcal{X}} \phi(\thetav) \rho(\thetav) \mathrm{d}\thetav$ 
for some test function $\phi(\thetav)$ of interest. Given samples 
$(\thetav_{t})_{t=1}^T$ from pSGLD, we use the {\em sample average} $\hat{\phi}$
to approximate $\bar{\phi}$. In the analysis,
we define a functional $\psi$ that solves the following \emph{Poisson Equation}:
\begin{align}\label{eq:PoissonEq1}
	\mathcal{L} \psi(\thetav_{t}) =  \phi(\thetav_{t}) - \bar{\phi}~.
\end{align}
The solution functional $\psi(\thetav_{t})$ characterizes the difference between $\phi(\thetav_{t})$ 
and the posterior average $\bar{\phi}$ for every $\thetav_{t}$, thus would typically possess a unique 
solution, which is at least as smooth as $\phi$ under the elliptic or hypoelliptic settings \cite{MattinglyST:JNA10}. 
In the unbounded domain of $\thetav_{t}$, to make the presentation simple, we 
follow \cite{chen2015integrator} and make certain assumptions on the solution functional, $\psi$, of 
the Poisson equation \eqref{eq:PoissonEq1}, which are used in the detailed proofs. 

The mild assumptions of smoothness and boundedness made in the main paper are detailed as follows.

\begin{assumption}\label{ass:assumption1}
$\psi$ and its up to 3rd-order derivatives, $\mathcal{D}^k \psi$, are bounded by a
function $\mathcal{V}$, {\it i.e.}, 
$\|\mathcal{D}^k \psi\| \leq C_k\mathcal{V}^{p_k}$ for $k=(0, 1, 2, 3)$, $C_k, p_k > 0$. Furthermore, 
the expectation of $\mathcal{V}$ on $\{\thetav_{t}\}$ is bounded: $\sup_t \mathbb{E}\mathcal{V}^p(\thetav_{t}) < \infty$, 
and $\mathcal{V}$ is smooth such that 
$\sup_{s \in (0, 1)} \mathcal{V}^p\left(s\thetav + \left(1-s\right)Y\right) \leq C\left(\mathcal{V}^p\left(\thetav\right) + \mathcal{V}^p\left(Y\right)\right)$, $\forall \thetav, Y, p \leq \max\{2p_k\}$ for some $C > 0$.
\end{assumption}

\subsection{Proof of Theorem~1}

Based on Assumption~\ref{ass:assumption1},  we prove the main theorem.

\begin{proof}
First let us denote
\begin{align}
 	\tilde{\mathcal{L}}_t &= \left(G^{}(\thetav_t) \Big (\nabla_{\thetav}  \log p(\thetav_t)
+ \frac{N}{n}\sum_{i=1}^{n} \nabla_{\thetav}  \log p(\dv_{t_i} | \thetav_t ) \Big ) \right.
\nonumber \\
&+ \left.\Gamma(\thetav_t)\right) \cdot \nabla_{\thetav} 
+ \frac{1}{2}G^{}(\thetav)\left(G^{}(\thetav)^T\right):\nabla_{\thetav}  \nabla_{\thetav} ^T~,
\end{align}
the local generator of our proposed pSGLD with stochastic gradients, where $\av\cdot\bv\triangleq \av^\top\bv$ is the vector inner product, $\Amat:\Bmat\triangleq \mbox{tr}\{\Amat^\top\Bmat\}$ is the matrix double dot product. Furthermore,
let $\mathcal{L}$ be the true generator of the Langevin dynamic corresponding to
the pSGLD, {\it e.g.}, replacing the stochastic gradient in $\tilde{\mathcal{L}}_t$ with
the true gradient. As a result, we have the relation: 
\begin{align}
	\tilde{\mathcal{L}}_t = \mathcal{L} + \Delta V_t~,
\end{align}
where $\Delta V_t \triangleq (N \bar{g}(\thetav_t; \Dcal^t) - g(\thetav_t; \Dcal^t))^\top G^{} (\thetav_t) \nabla_{\thetav}$, $g(\thetav_t; \Dcal^t)$ is the full gradient, $\bar{g}(\thetav_t; \Dcal^t)$ is the stochatic gradient calculated from the $t$-th minibatch.

In pSGLD, we use the Euler integrator, which is a first order integrator. As a result, according to \cite{chen2015integrator},
for a test function $\phi$, we can decompose it as:
\begin{align} 
	\mathbb{E}[\psi(\thetav_{t})] &= e^{\epsilon_t\tilde{\mathcal{L}}_t} \psi(\thetav_{(t-1)}) + O(\epsilon_t^{2}) \nonumber\\
	&= \left(\mathbb{I} + \epsilon_t\tilde{\mathcal{L}}_t\right) \psi(\thetav_{(t-1)}) + O(\epsilon_t^{2})~,
\end{align}
where $\mathbb{I}$ is the identity map, \ie$\mathbb{I} f(x) = f(x)$.

According to the assumptions, there exists a functional $\psi$ that solves the following Poisson Equation:
\begin{align}\label{eq:PoissonEq}
	\mathcal{L} \psi(\thetav_{t}) =  \phi(\thetav_{t}) - \bar{\phi}~,
\end{align}
where $\bar{\phi}$ is defined in the main text.

Sum over $t = 1, \cdots, T$ in the above equation, take expectation on both sides, and use the Poisson 
Equation~\eqref{eq:PoissonEq} and the relation $\tilde{\mathcal{T}}_t = \Lcal + \Delta V_t$ to expand the 
first order term. We obtain
\begin{align}
	\sum_{t=1}^T\mathbb{E}\left(\psi(\thetav_{t})\right) &= \sum_{t=1}^T \psi(\thetav_{(t-1)}) + \sum_{t=1}^T \epsilon_t\mathcal{T}_t\psi(\thetav_{(t-1)}) 
	\nonumber\\
	& + \sum_{t=1}^T \epsilon_t\Delta V_l\psi(\thetav_{(t-1)}) + C\sum_{t=1}^T \epsilon_t^{2}~.
\end{align}
Divide both sides by $S_T$, we have
{\small
\begin{align}
	\hat{\phi} - \bar{\phi} &= \frac{\mathbb{E}\psi(\thetav_{t}) - \psi(\thetav_0)}{S_T} + \frac{1}{S_T}\sum_{l=1}^{T-1}\left(\mathbb{E}\psi(\thetav_{(t-1)}) + \psi(\thetav_{(t-1)})\right) 
	 \nonumber \\
	&+ \sum_{t=1}^T \frac{\epsilon_t}{S_T} \Delta V_t\psi(\thetav_{(t-1)}) + C\frac{\sum_{t=1}^T \epsilon_t^2}{S_T}~.
\end{align}
}

As a result, there exists some positive constant $C$, such that:
\begin{align}
	&\left(\hat{\phi} - \bar{\phi}\right)^2 \leq C\left(\frac{1}{S_T^2}\underbrace{\left(\psi(\thetav_0) - \mathbb{E}\psi(\thetav_{T})\right)^2}_{A_1}\right. \nonumber\\
	&+ \underbrace{\frac{1}{S_T^2}\sum_{t=1}^T\left(\mathbb{E}\psi(\thetav_{(t-1)}) - \psi(\thetav_{(t-1)})\right)^2}_{A_2}
	+ \sum_{t=1}^T \frac{\epsilon_t^2}{S_T^2}\left\|\Delta V_t\right\|^2 \nonumber\\
	&\left. + \left(\frac{\sum_{t=1}^T \epsilon_t^2}{S_T}\right)^2\right) \label{eq:mse1_w}
\end{align}

$A_1$ can be bounded by assumptions, and $A_2$ can be easily shown to be bounded by $O(\sqrt{\epsilon_t})$ due
to the Gaussian noise. It turns out that the resulting terms have order higher than those from the other terms, thus can be ignored
in the expression below. After some simplifications, \eqref{eq:mse1_w} is bounded by:
{\small
\begin{align}
	&\mathbb{E}\left(\hat{\phi} - \bar{\phi}\right)^2 \lesssim \sum_t \frac{\epsilon_t^2}{S_T^2}\mathbb{E}\left\|\Delta V_t\right\|^2 + \frac{1}{S_T} + \frac{1}{S_T^2} + \left(\frac{\sum_{t=1}^L \epsilon_t^{2}}{S_T}\right)^2 \nonumber\\
	&= C\left(\sum_t \frac{\epsilon_t^2}{S_T^2}\mathbb{E}\left\|\Delta V_t\right\|^2 + \frac{1}{S_T} + \frac{(\sum_{t=1}^T \epsilon_t^{2})^2}{S_T^2} \right)
\end{align}
}
for some $C > 0$. It is easy to show under the assumptions, all the terms in the above bound approach
zero. This completes the first part of the theorem. 
\hfill $\blacksquare$
\end{proof}

\section{B.$\quad$The proof for Corollary~2}

To prove Corollary~2, we first show the following results.

\begin{lemma}\label{lem:bound}
	Assume that the 1st-order and 2nd-order gradient are bounded, 
         then there exists some constant $M$, for k-th component of $\Gamma(\thetav_t)$, we have
	\begin{align}
		\left|\sum_{t=1}^T \Gamma_k(\thetav_t)\right| \leq M T \frac{(1 - \alpha)}{\alpha^{\frac{3}{2}} }~.
	\end{align}
\end{lemma}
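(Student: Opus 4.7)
The plan is to prove a uniform per-step bound $|\Gamma_k(\thetav_t)| \leq M(1-\alpha)/\alpha^{3/2}$; summing $t=1,\ldots,T$ then yields the displayed inequality by the triangle inequality $|\sum_t \Gamma_k(\thetav_t)| \leq \sum_t |\Gamma_k(\thetav_t)|$. I would start by recalling the explicit form of $\Gamma$ for pSGLD: the diagonal preconditioner is $G_{kk}(\thetav_t) = 1/(\lambda + \sqrt{V_k(\thetav_t)})$, where $V_k$ is the RMSprop-style accumulator $V_k(\thetav_t) = \alpha V_k(\thetav_{t-1}) + (1-\alpha)\bar{g}_k(\thetav_t)^2$, and $\Gamma_k = \partial_{\theta_k} G_{kk}$. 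Differentiating,
\[\Gamma_k(\thetav_t) = -\frac{\partial_{\theta_k} V_k(\thetav_t)}{2\sqrt{V_k(\thetav_t)}\,(\lambda + \sqrt{V_k(\thetav_t)})^2}.\]

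For the numerator, the recursion contributes the step-$t$ term $2(1-\alpha)\bar{g}_k(\thetav_t)\,\partial_{\theta_k}\bar{g}_k(\thetav_t)$, carrying an explicit prefactor of $(1-\alpha)$. The assumed boundedness of the 1st- and 2nd-order gradients then gives $|\partial_{\theta_k} V_k(\thetav_t)| \leq C_1(1-\alpha)$ for some constant $C_1$ depending only on those gradient bounds; contributions from earlier steps decay as $\alpha^{t-s}$ and slot into the same form after a standard geometric-sum estimate.

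For the denominator, iterating the recursion gives $V_k(\thetav_t) \geq \alpha V_k(\thetav_{t-1}) \geq \cdots \geq c_0\alpha$ after a short burn-in past positive initialisation. Hence $\sqrt{V_k} \geq \sqrt{c_0\alpha}$ contributes a factor of $\alpha^{-1/2}$, and $(\lambda+\sqrt{V_k})^2 \geq V_k \geq c_0\alpha$ contributes a factor of $\alpha^{-1}$, for $\alpha^{-3/2}$ altogether. Combining, $|\Gamma_k(\thetav_t)| \leq M(1-\alpha)/\alpha^{3/2}$ uniformly in $t$ with $M = C_1/(2c_0^{3/2})$, and the sum bound follows immediately.

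The main obstacle is the $\alpha^{-3/2}$ factor: it relies crucially on a multiplicative lower bound $V_k \gtrsim \alpha$ derived from the recursion, which in turn requires $V_k$ to remain bounded away from zero (typically ensured by positive initialisation or by a small additive floor in the preconditioner). A secondary subtlety is that the $(1-\alpha)$ prefactor must be preserved by keeping the derivative bookkeeping local to the current step, because fully unrolling the geometric series $(1-\alpha)\sum_{s=1}^t \alpha^{t-s}= 1-\alpha^t$ would otherwise collapse and destroy the leading factor.
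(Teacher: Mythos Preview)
Your proposal is correct and follows essentially the same route as the paper: a per-step bound $|\Gamma_k(\thetav_t)| \lesssim (1-\alpha)/\alpha^{3/2}$ obtained by extracting the $(1-\alpha)$ prefactor from the current-step derivative of $V$ and pulling out $\alpha^{-3/2}$ from the denominator via $V(\thetav_t) \geq \alpha V(\thetav_{t-1})$, then summing over $t$. The only cosmetic differences are that the paper works with the simpler form $G = V^{-1/2}$ (no $\lambda$), stops after one step of the recursion rather than iterating to initialisation, and absorbs $V(\thetav_{t-1})^{-3/2}$ together with $\bar g$ and $\partial\bar g$ directly into the constant $M$; the gap you flag about needing $V$ bounded below is present in the paper's argument as well.
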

\begin{proof}

Since $\Gamma(\thetav) $
is a diagnal matrix, we focus on one of its elements thus omit the index $k$ in the following.

First, the iterative form of exponential moving average can be written as a function of the gradients at all the previous timesteps:
\beqs
V(\theta_t) = 
\alpha V(\theta_{t-1}) + (1-\alpha) \bar{g}^2(\theta_t) \\
= (1-\alpha) \sum_{i=1}^{t} \alpha^{t-i} \bar{g}^2(\theta_i) 
\eeqs
Based on this, for each component of $\Gamma(\thetav_t)$, we have
\begin{align}
&\left|\sum_{t=1}^T \Gamma(\theta_t)\right| 
=  \left| \sum_{t=1}^{T} (1- \alpha) V^{ -\frac{3}{2}}(\theta_t)  \bar{g}(\theta_t)  \frac{\partial \bar{g}(\theta_t) }{\partial \theta_t}  \right|  \\
&=   \left|  \sum_{t=1}^{T} (1- \alpha)  
\frac{ \bar{g}(\theta_t) }{ \big (  \alpha V(\theta_{t-1}) + (1-\alpha) \bar{g}^2(\theta_t)   \big  ) ^{\frac{3}{2}} }
\frac{\partial \bar{g}(\theta_t) }{\partial \theta_t}   \right|  \\
&\ll  \left|   \sum_{t=1}^{T} 
\frac{  (1- \alpha)    }{ \alpha^{\frac{3}{2}}    V^{\frac{3}{2}}(\theta_{t-1})    }
\frac{\partial \bar{g}(\theta_t) }{\partial \theta_t}   \right| 
\end{align}

With the assumption that  the 1st-order and 2nd-order gradient are bounded, we have $  \left |   V^{-\frac{3}{2}}(\theta_{t-1})   \frac{\partial  \bar{g}(\theta_t)  }{\partial \theta_t}   \right |  \le M$, where $M$ is a constant independent of   $\{ \epsilon_t \}$. Therefore, 
$\left|\sum_{t=1}^T \epsilon_t \Gamma(\theta_{t}) \right | \ll M T (1- \alpha)/ \alpha^{\frac{3}{2}} $.	
\hfill $\blacksquare$
\end{proof}

Based on Lemma~\ref{lem:bound}, we now proceed to the proof of Corollary~2.
\begin{proof}
By dropping the $\Gamma(\thetav_t)$ terms, we get a modified version of the local generator corresponding to
the SDE of the pSGLD, defined as 
\begin{align*}
	\tilde{\mathcal{L}}_t = \mathcal{L} + \Delta \tilde{V}_t~,
\end{align*}
where $\Delta \tilde{V}_t = \Delta V_t + \Gamma(\thetav_t) \cdot \nabla_{\thetav} $ with $\Delta V_t$ defined in the proof of
Theorem~1.

Following the proof of Theorem~1, we can derive the bound for $(\hat{\phi} - \bar{\phi})^2$, which is no more than
\eqref{eq:mse1_w} with an extra term as:
\begin{align}
	&\left(\hat{\phi} - \bar{\phi}\right)^2 \leq C\left(\frac{1}{S_T^2}\underbrace{\left(\psi(\thetav_0) - \mathbb{E}\psi(\thetav_{T})\right)^2}_{A_1}\right. \nonumber\\
	&+ \underbrace{\frac{1}{S_T^2}\sum_{t=1}^T\left(\mathbb{E}\psi(\thetav_{(t-1)}) - \psi(\thetav_{(t-1)})\right)^2}_{A_2}
	+ \sum_{t=1}^T \frac{\epsilon_t^2}{S_T^2}\left\|\Delta V_t\right\|^2 \nonumber\\
	&\left. + \underbrace{\left\|\sum_{t=1}^T\frac{\epsilon_t}{S_T}\Gamma(\thetav_t)\right\|^2}_{A_3} + \left(\frac{\sum_{t=1}^T \epsilon_t^2}{S_T}\right)^2\right) \label{eq:mse2_w}
\end{align}
We can further relax $A_3$ above as:
\begin{align}
	A_3 &\leq \left(\sum_{k}\left|\sum_{t=1}^T\frac{\epsilon_t}{S_T}\Gamma_k(\thetav_t)\right|\right)^2 
	\nonumber\\
	&\leq \left(\sum_{k}\frac{\epsilon_1}{T\epsilon_T}\left|\sum_{t=1}^T\Gamma_k(\thetav_t)\right|\right)^2 
	\nonumber\\
	&\leq O\left(\frac{ (1 - \alpha)^2 }{\alpha^3}\right)~,
\end{align}
where the last inequality follows by using the bound from Lemma~\ref{lem:bound}.
Taking expectation on both sides, we arrive at the MSE:
{\scriptsize
\begin{align}
	&\mathbb{E}\left(\hat{\phi} - \bar{\phi}\right)^2 \leq
	\nonumber \\ & \hspace{-0.5cm}C\left(\sum_t \frac{\epsilon_t^2}{S_T^2}\mathbb{E}\left\|\Delta V_t\right\|^2 + \frac{1}{S_T} + \frac{(\sum_{t=1}^T \epsilon_t^{2})^2}{S_T^2} +
	 \mathbb{E}\left\|\sum_{t=1}^T\frac{\epsilon_t}{S_T}\Gamma(\thetav_t)\right\|^2 \right) 
	\nonumber \\
	&\leq \mathcal{B}_{\text{mse}} + O\left(\frac{ (1 - \alpha)^2  }{\alpha^3}\right)~.
\end{align}
}
for some $C > 0$.
\hfill $\blacksquare$
\end{proof}

\section{C.$\quad$The proof for Corollary~3}

\begin{proof}
By thinning samples from the pSGLD, we obtain a sequence of subsamples $\{\thetav_{t_1}, \cdots, \thetav_{t_m}\}$
from the original samples $\{\thetav_1, \cdots, \thetav_n\}$ where $m \leq n$ and $(t_1, \cdots, t_m)$ is a subsequence
of $(1, 2, \cdots, n)$. Since we use the 1st-order Euler integrator, based on the definition in \cite{chen2015integrator}, 
we have for the original samples:
\begin{align}\label{eq:komogorov_op}
	\tilde{P}_lf(\thetav_l) \triangleq \mathbb{E}f(\thetav_l) = e^{\epsilon_l\tilde{\mathcal{L}}_l}f(\thetav_l) + O(\epsilon_l^2)~,
\end{align}
where $\tilde{P}_l$ denotes the Kolmogorov operator. Now for samples between $t_i$ and $t_j$, {\it i.e.}, 
$\{\thetav_{t_i}, \cdots, \thetav_{t_j}\}$, we have
\begin{align}\label{eq:composition}
	\tilde{P}_{t_j}f(\thetav_i) = \tilde{P}_{t_j} \circ \cdots \circ \tilde{P}_{t_i} f(\thetav_i)~,
\end{align}
where $A \circ B$ denotes the composition of the two operators $A$ and $B$, {\it i.e.}, $A$ is evaluated on the output
of B. Now substitute \eqref{eq:komogorov_op} into \eqref{eq:composition}, and use the Baker-Campbell-Hausdorff 
formula \cite{Bakhturin01} for commutators, we have
\begin{align}
	\tilde{P}_{t_j}f(\thetav_i) &= e^{\sum_{l = i}^j\epsilon_l\tilde{\mathcal{L}}_l} f(\thetav_i) + O(\sum_{l = i}^j\epsilon_l^2)  \nonumber \\
	&\leq e^{S_{ij}\tilde{\mathcal{L}}_{ij}} f(\thetav_i) + O(S_{ij}^2)~,
\end{align}
where $S_{ij} \triangleq \sum_{l=i}^j \epsilon_l, \tilde{\mathcal{L}}_{ij} \triangleq \sum_{l = i}^j\frac{\epsilon_l}{S_{ij}}\tilde{\mathcal{L}}_l$. 
This means by thinning the samples, going from $\thetav_i$ to $\thetav_j$
corresponds to a 1st-order local integrator with stepsize $S_{ij}$ and a modified generator of the corresponding SDE as
$\tilde{\mathcal{L}}_{ij}$, which is in the same form as the original generator $\mathcal{L}$.

By performing the same derivation with the new generator $\tilde{\mathcal{L}}_{ij}$, we obtain the same MSE as in Theorem~1 in the main text.
\hfill $\blacksquare$
\end{proof}

\section{D.$\quad$The proof for bias-variance tradeoff}

\subsection{Bias-variance decomposition}
\beqs
\text{Risk} &:& R = \E [ ( \bar{\phi}  - \hat{\phi} )^2]  =  B^2 + V 
\label{eq:risk}
\eeqs

\begin{proof}

\begin{align*}
R 
&= \E [ ( \bar{\phi}  - \hat{\phi} )^2]\\
&= 
\E [ ( \bar{\phi} - \bar{\phi}_{\eta} + \bar{\phi}_{\eta} - \hat{\phi} )^2]  \\
&=  \E [ ( \bar{\phi} - \bar{\phi}_{\eta})^2  + (\bar{\phi}_{\eta} - \hat{\phi} )^2 
+ 2 ( \bar{\phi} - \bar{\phi}_{\eta}) (\bar{\phi}_{\eta} - \hat{\phi} )    ]  \\
&=  \E [ ( \bar{\phi} - \bar{\phi}_{\eta})^2  ]  +  \E [ (\bar{\phi}_{\eta} - \hat{\phi} )^2  ] \\ \nonumber
& \qquad+ 2  \E [ ( \bar{\phi} - \bar{\phi}_{\eta}) (\bar{\phi}_{\eta} - \hat{\phi} )    ]  \\
&=   ( \bar{\phi} - \bar{\phi}_{\eta})^2   +  \E [ (\bar{\phi}_{\eta} - \hat{\phi})^2  ] \\ 
&= B^2 + V
\end{align*}
where
\beqs
\text{Bias} &:& B = \bar{\phi}_{\eta} - \bar{\phi} \label{eq:bias}
\\
\text{Variance} &:& V = \E [ (  \bar{\phi}_{\eta} -  \hat{\phi})^2 ] \label{eq:varance}
\eeqs
\hfill $\blacksquare$
\end{proof}

\subsection{Variance term in risk of estimator}

\beqs
\text{Variance} &:& V = \E [ (  \bar{\phi} -  \hat{\phi})^2 ]
\approx  \frac{   A_{} (0) }{ M } \label{eq:varance}
\eeqs

\begin{proof}
\beqs
V &=&\ \E [ (  \bar{\phi}_{\eta} -  \hat{\phi} )^2 ] \nonumber \\
& = & 
\E \Big[ \Big (  \bar{\phi}_{\eta} -  
\frac{1}{T} \sum_{i=1}^{T}\phi (\thetav_i)  \Big )^2\Big ] \\
& = & 
\frac{1}{T^2}\E \Big[ \sum_{i=1}^{T}  \sum_{j=1}^{T}
\Big (   \bar{\phi}_{\eta} -  \phi(\thetav_i)  \Big ) 
\Big (   \bar{\phi}_{\eta} -  \phi(\thetav_{j})  \Big ) 
\Big ] \nonumber\\
& = & 
\frac{1}{T^2} \sum_{i=1}^{T}  \sum_{j=1}^{T}
A( |i-j|)
\\
& = &  
\frac{1}{T^2}  \sum_{i=1}^{T}  \left(\sum_{t=-\infty}^{\infty}
A( |t|) - \sum_{|t| > 2T}A( |t|)\right)  \label{eq:ac1}\\
& \approx &  
\frac{1}{T^2}  \sum_{i=1}^{T}  \sum_{t=-\infty}^{\infty}
A( |t|)  \label{eq:ac2}\\
& = &  
\frac{1}{T} \Big(  A( 0 )+ 2 \sum_{t=1}^{\infty}
A( t )
\Big ) \\
& = &  
\frac{A( 0 ) }{T} \Big(  1 + 2 \sum_{t=1}^{\infty}
\frac{A( t ) }{ A( 0 )  } 
\Big ) 
\label{eq:varance_derivation}
\eeqs
where the term $\sum_{|t| > 2T}A( |t|)$ is omitted from \eqref{eq:ac1} to \eqref{eq:ac2},
which is usually small according to the property of autocovariance function.

We repeat some defintions from the main paper~\cite{gamerman2006markov}. 
\beqs
A(t) = 
\E [ (  \bar{\phi}_{\eta} -  \phi(\thetav_0) )
(  \bar{\phi}_{\eta} -  \phi(\thetav_t) )
]
\eeqs
is the \textit{autocovariance function}, manifesting how strong two samples with a time lag $t$ are correlated. Its normalized version 
\beqs
\text{ACF} &:& 
\gamma(t) = 	\frac{A( t ) }{ A( 0 )  }  
\eeqs
is called the \textit{autocorrelation function} (ACF). 

\beqs
\text{ACT} &:& 
\tau=  \frac{1}{2} +  \sum_{t=1}^{\infty} 	\gamma(t)   
\eeqs
is the integrated \textit{autocorrelation time} (ACT), which measures the interval between independent samples.

Note that \textit{effective sample size} (ESS) is defined as
\beqs
\text{ESS} &:& M = 
\frac{ T }{ 1 + 2 \sum_{t=1}^{\infty} 	\frac{A( t ) }{A( 0 )  }  } 
\label{eq:ess}
\eeqs

Plugin the definition into the derivation for variance, we have 
\beqs
V \approx \frac{A( 0 ) }{T} \Big(  1 + 2 \sum_{t=1}^{\infty}
\frac{A( t ) }{ A( 0 )  } \Big ) 
= \frac{A( 0 ) }{M} 
\eeqs
\hfill $\blacksquare$
\end{proof}

\section{E.$\quad$More results on simulation}
We demonstrate our pSGLD on a simple 2D gaussian example,
{\scriptsize	$\Ncal \Big( \left [ \begin{array}{c} 0 \\ 0\end{array}  \right ], 
	\left [ \begin{array}{cc} 0.16&0 \\ 0& a \end{array} \right ] 
	\Big)$}. 
The first 600 samples of both methods for  different $a$ and $\epsilon$ are shown in Fig.~1.

Comparing the results for different stepsize $\epsilon$ at the same $a$, it can be seen that pSGLD can adapt stepsizes acorrding to the manifold geometry of different dimensions. 

When $a$ is rescaled from $0.5$ to $2$, stepsize $\epsilon=0.1$ is appropriate for SGLD at $a=0.5$, but not a good choice at $a=2$, because the space is not fully explored. This also implies that even if the covariance matrix of a target distribution is mildly rescaled, we do not have to choose a new stepsize for pSGLD. Whilst, the stepsize of the standard SGLD needs to be fine-tuned in order to obtain decent samples.

\begin{figure} 
	\begin{tabular}{ c  c  }
			\hspace{-4mm}
		\begin{minipage}{4.2cm}
			\includegraphics[width=4.2cm]{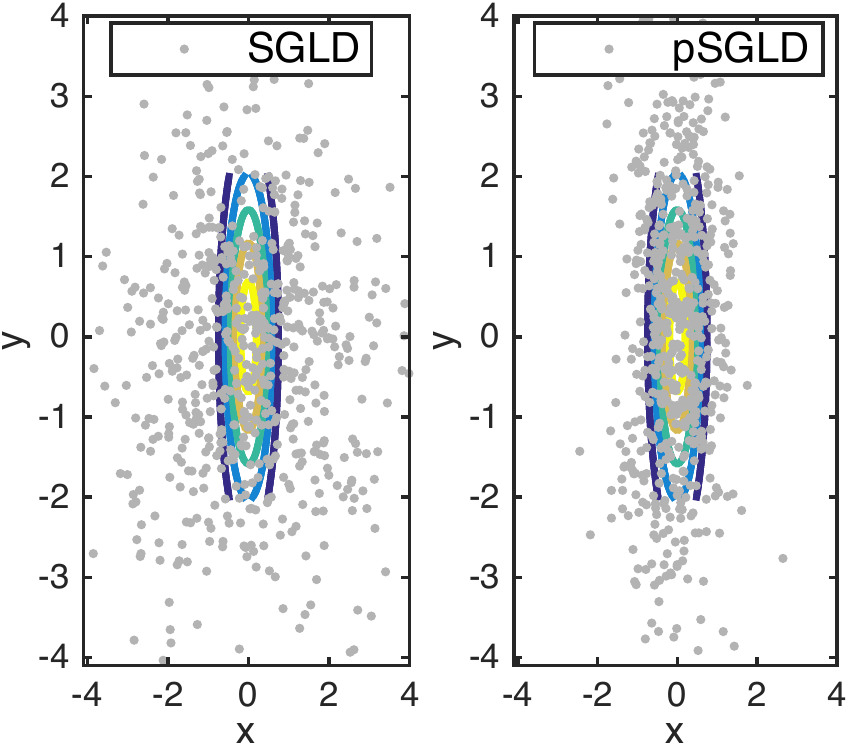} 
		\end{minipage}   &
			\hspace{-4mm}
		\begin{minipage}{4.2cm}\vspace{0mm}
			\includegraphics[width=4.2cm]{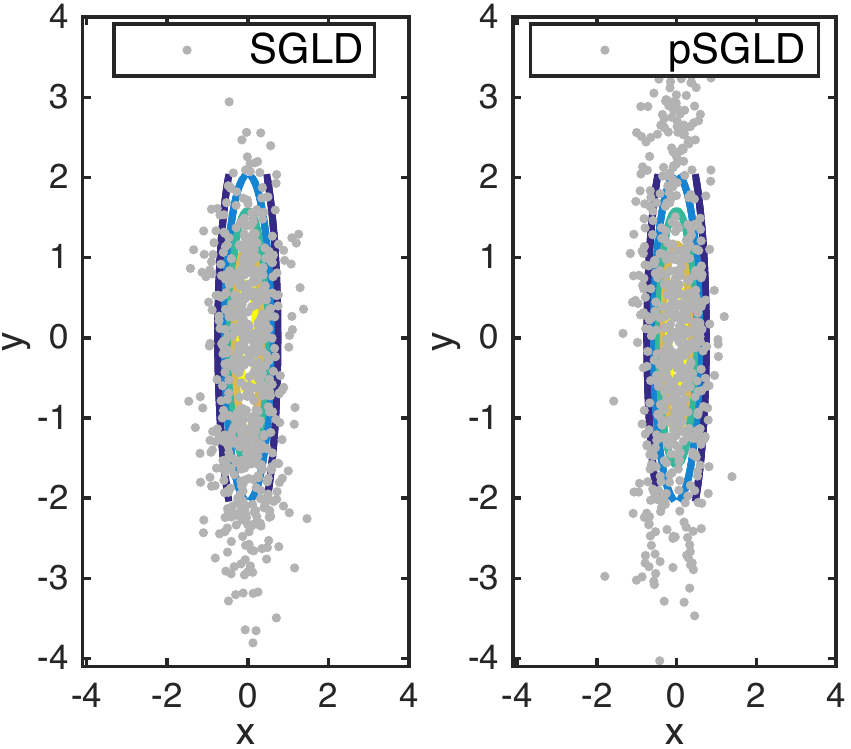} 
		\end{minipage} \\ 
		(a) $a = 2, \epsilon = 0.3$
		\vspace{2mm}
		&
		(b) $a = 2, \epsilon = 0.1$
		\\ 
			\hspace{-4mm}
		\begin{minipage}{4.2cm}
			\includegraphics[width=4.2cm]{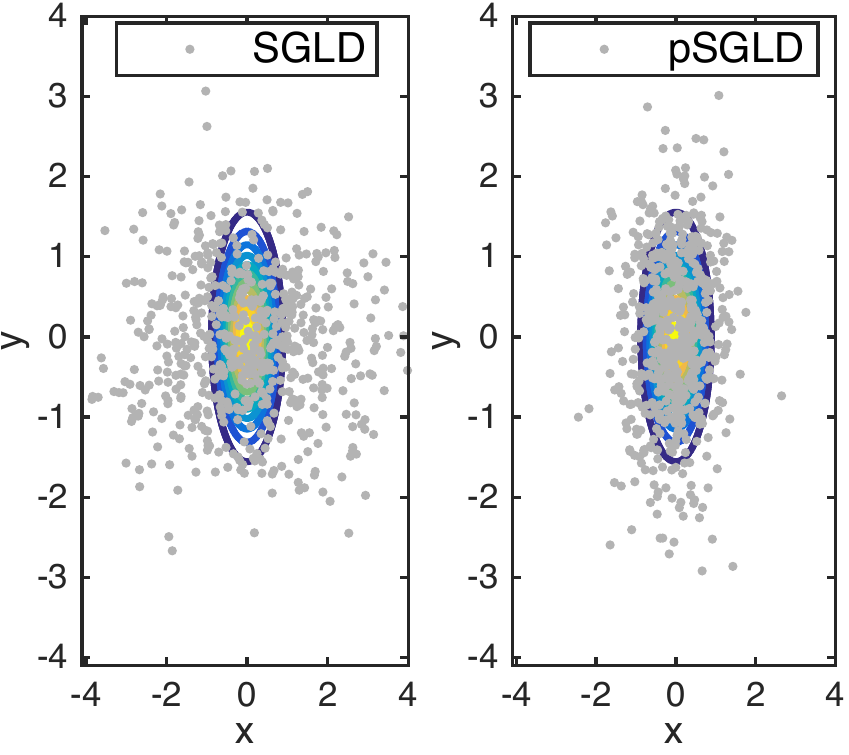} 
		\end{minipage}  &
			\hspace{-4mm}
		\begin{minipage}{4.2cm}\vspace{0mm}
			\includegraphics[width=4.2cm]{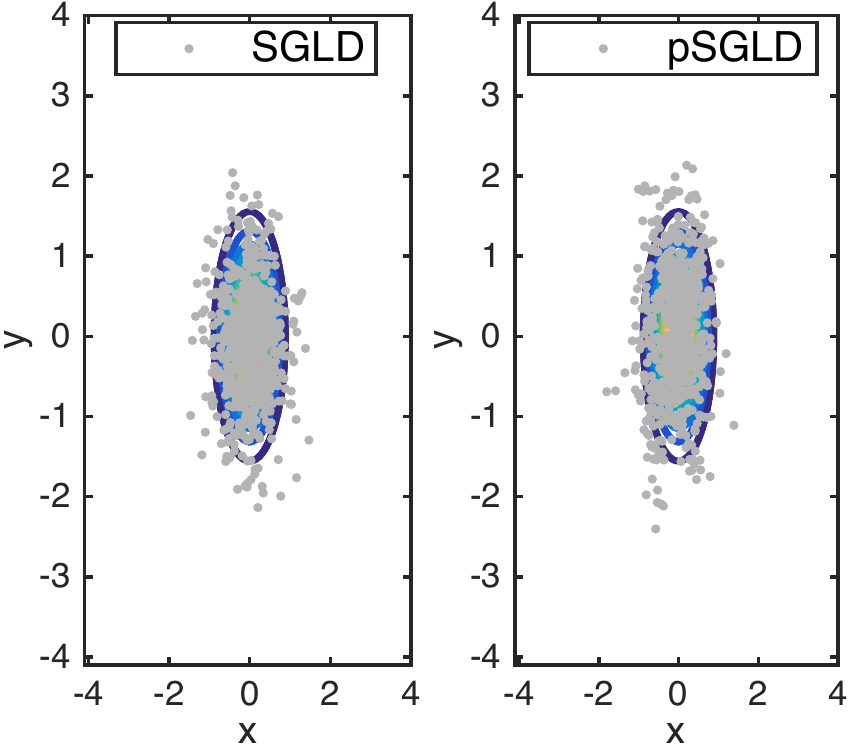} 
		\end{minipage} \\
		(c) $a = 0.5, \epsilon = 0.3$
		&
		(d) $a = 0.5, \epsilon = 0.1$
	\end{tabular}
	\label{fig:simulation}
	\caption{Simulation.}
\end{figure}		

\vspace{0mm}
\section{F.$\quad$More results on \\Feedforward Neural Networks}
Learning curves for network sizes of 400-400 and 800-800 on MNIST are provided in Fig.~\ref{fig:fnn} (a) and (b), respectively. Similar with results of network size 1200-1200 in the main paper, stochastic sampling methods take less iterations to converge, and the results are more stable than their optimization counterparts. Moreover, it can be seen that pSGLD consistently converges faster and better than SGLD and others.
\begin{figure}[t!] \centering
	
	\begin{tabular}{c c}
		\hspace{-4mm}
		\includegraphics[width=4.2cm]{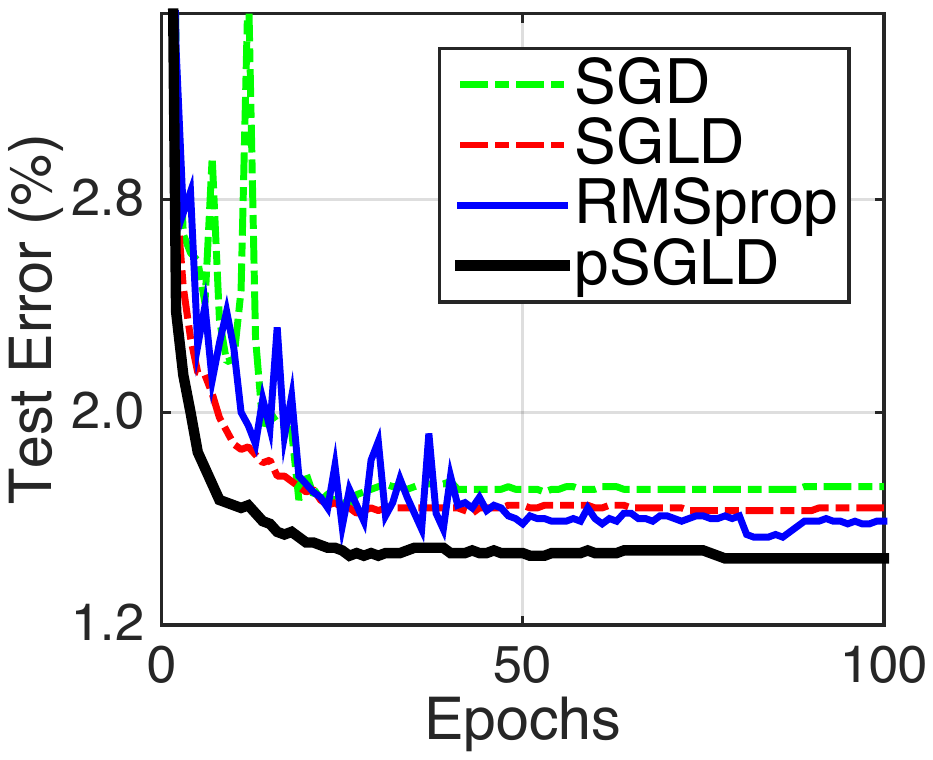} &
		\hspace{-4mm}
		\includegraphics[width=4.2cm]{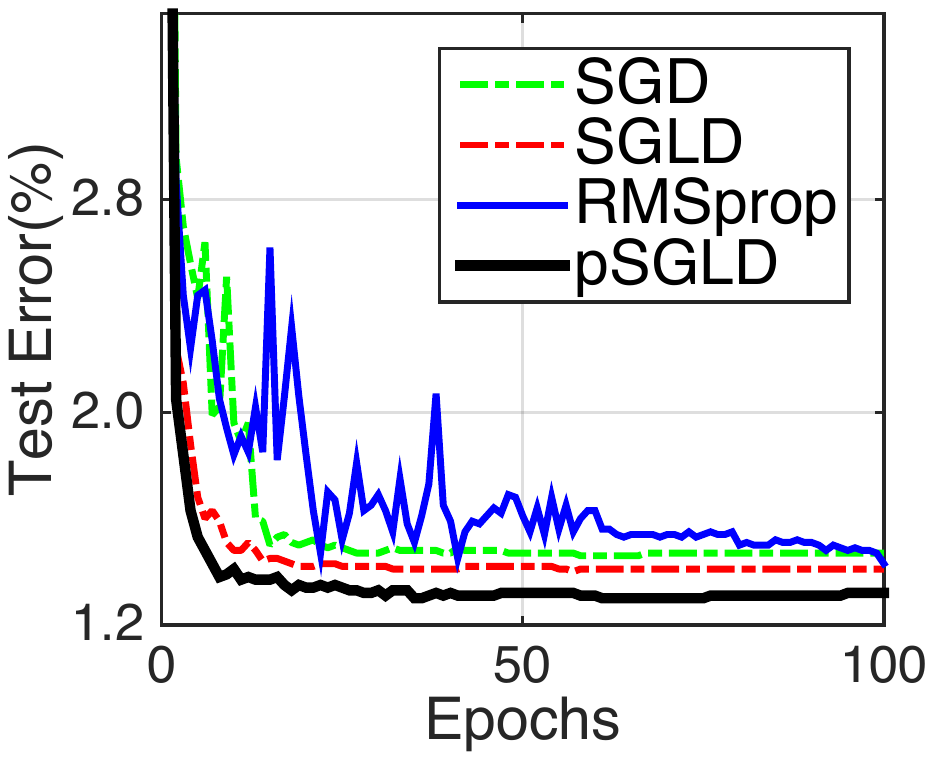} \\
		(a) 400-400 & (b) 800-800
	\end{tabular} \vspace{-1mm}
	\caption{Learning curves of FNN at different network sizes.}
	\label{fig:fnn}
	\vspace{0pt}
\end{figure}

\section{G.$\quad$More results on \\Convolutional Neural Networks} 
We use another fairly standard network configuration containing 2 convolutional layers on MNIST dataset. It is followed by a single fully-connected layer~\cite{lee2015deeply}, containing {\bf 500} hidden nodes that uses ReLU. Both convolutional layers use $5 \times 5$ filter size with 32 and 64 channels, respectively, $2 \times 2$ max pooling are used after each convolutional layer. 100 epochs are used, and $L$ is set to 20. The stepsizes for pSGLD and RSMprop are set to $\epsilon = \{1, 2\} \times 10^{-3} $ via grid search. 
For SGLD and SGD, this is $\epsilon =  \{1, 2\} \times 10^{-1} $. 

A comparison of test errors is shown in Table~1\ref{tab:cnn}, with the corresponding learning curves in Fig.~3\ref{fig:cnn}.  Again, under the same network architecture, CNN trained with traditional SGD gives an error of 0.81\%, while pSGLD has a significant improvement, with an error of 0.56\%. 

 \vspace{2mm}
\begin{minipage}{0.48\textwidth}\centering
	\hspace{-2mm} \vspace{2mm}
	\begin{minipage}[b]{0.42\textwidth}
		\centering
		\small
		\begin{tabular}{cc}
			\hline
			Method & Test error \\
			\hline
		pSGLD     & {\bf 0.56\%} \\
		SGLD				     &  0.76\%\\		  			
		RMSprop	    		   &  0.64\%\\
		SGD				          &  0.81\%\\	
			\hline
		\end{tabular}
		\vspace{5mm}
		\label{tab:cnn}
		\captionof{table}{Results of CNN.}
	\end{minipage}
	\hfill \hfill
	\begin{minipage}[b]{0.58\textwidth}
		\centering
		\includegraphics[width=4.45cm]{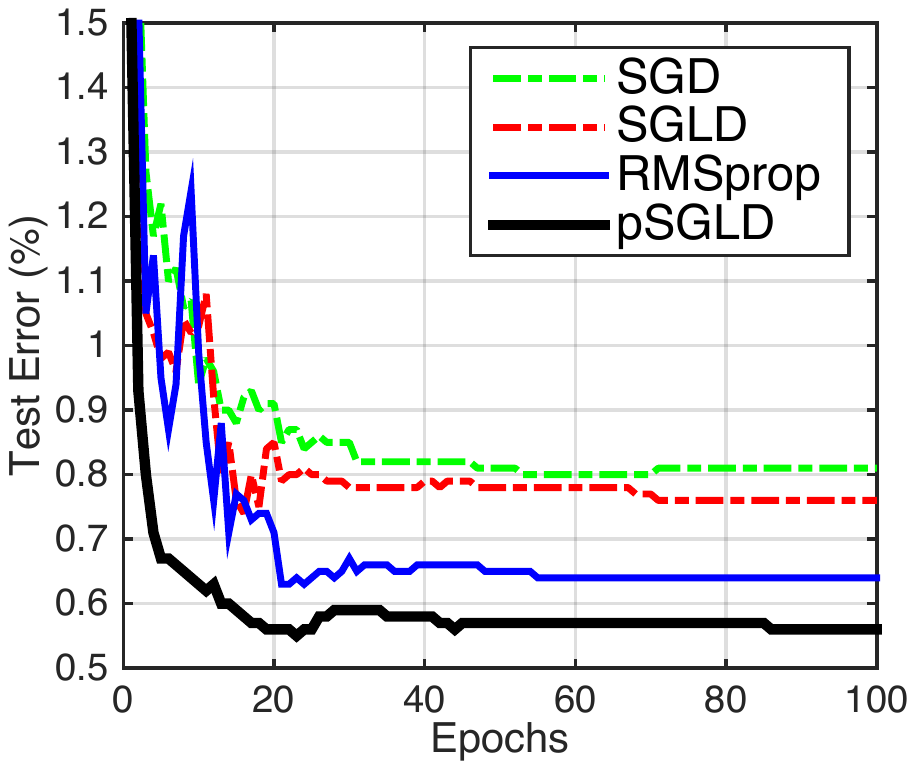}
		\label{fig:cnn}
		\vspace{-2mm}
		\captionof{figure}{Learning curves.}
	\end{minipage}
\end{minipage}

We also tested a similar 3-layer CNN with 32-32-64 channels on Cifar-10 RGB image dataset~\cite{krizhevsky2009learning}, which consists of $50,000$ samples for training and $10,000$ samples for testing. No data augmentation is employed for the dataset. We keep the same setting for pSGLD and SGLD from MNIST, and show the comparison on Cifar-10 in Fig.~\ref{fig:cnn_cifar}. pSGLD converges faster and reach a lower error.

\begin{figure}[h!]
	\centering	
	\begin{minipage}{8.0cm}	 \hspace{-8mm}\centering
		\includegraphics[width=5.3cm,height=4.0cm]{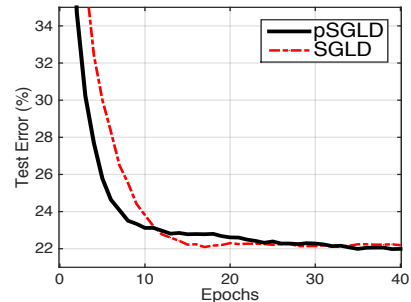} 
	\end{minipage}  
	\vspace{-3mm}
	\caption{Test learning curves of CNN on Cifar-10 dataset.}\label{fig:cnn_cifar}
	\vspace{-3mm}
\end{figure}

%

\end{document}